\documentclass[journal]{IEEEtran}

\usepackage[T1]{fontenc}
\usepackage{amsmath,amssymb,amsfonts,amsthm}
\usepackage{algorithmic}
\usepackage[ruled,linesnumbered]{algorithm2e}
\usepackage{graphicx}
\usepackage{textcomp}
\usepackage{xcolor}
\newif\ifdiffmode
\diffmodetrue
\NewDocumentCommand{\resp}{O{} m}{\ifdiffmode{\color{black}\if\relax\detokenize{#1}\relax\else\textbf{[#1]}~\fi#2}\else#2\fi}
\newcommand{\rev}[1]{#1}
\newcommand{\revB}[1]{#1}
\newcommand{\purple}[1]{#1}
\usepackage{booktabs}
\usepackage{multirow}
\usepackage{hyperref}
\usepackage{cleveref}
\crefname{lemma}{Lemma}{Lemmas}\Crefname{lemma}{Lemma}{Lemmas}
\crefname{corollary}{Corollary}{Corollaries}\Crefname{corollary}{Corollary}{Corollaries}
\usepackage{bm}
\usepackage{mathtools}
\usepackage{subcaption}
\usepackage{siunitx}
\usepackage{balance}

\newtheorem{theorem}{Theorem}
\newtheorem{lemma}[theorem]{Lemma}
\newtheorem{corollary}[theorem]{Corollary}

\newtheorem{remark}{Remark}
\newtheorem{assumption}{Assumption}

\newcommand{\Cfree}{\mathcal{C}_{\mathrm{free}}}
\newcommand{\Cspace}{\mathcal{C}}
\newcommand{\IR}{I_{\!R}}
\newcommand{\IE}{I_{\!E}}
\newcommand{\dR}{d_{\!R}}
\newcommand{\SPD}{\mathbb{S}_{++}^{d}}
\newcommand{\muR}{\mu_{\!R}}

\DeclareMathOperator*{\argmin}{arg\,min}

\IEEEoverridecommandlockouts

\begin{document}

\title{RIT*: Riemannian Informed Trees for Cost-Adaptive Optimal Motion Planning}

 \author{
     Muhayy~Ud~Din$^1$,
     Ahmed~Nadar$^1$,
     Jan~Rosell$^2$, 
     and~Irfan~Hussain$^{1*}$
     \thanks{This research was supported by the Center for Autonomous Robotic Systems, Khalifa University (KU-CARS), through the project ``T2FS by Silal'', under Project ID: KU-EXT-SILAL-2025-8475000023 and under Project PID2024-157729OB-I00 funded by MICIU/AEI/10.13039/ 501100011033/FEDER, UE.}
     \thanks{$^1$Muhayy~Ud~Din,
     Ahmed~Nadar,
 and Irfan~Hussain are with Center for Autonomous Robotic Systems, Khalifa University (KU-CARS).}%
     \thanks{$^2$Jan~Rosell is with the Institute of Industrial and Control Engineering, Universitat Politècnica de Catalunya, Barcelona, Spain.}
     \thanks{$^*$ e-mail:  irfan.hussain@ku.ac.ae}
 }
\markboth{IEEE Robotics and Automation Letters}{Author~et~al.: RIT*: Riemannian Informed Trees}

\maketitle
\begin{abstract}

We present Riemannian Informed Trees (RIT*), a planning framework
that replaces Euclidean primitives in batch-informed search with their
Riemannian counterparts.
RIT* constructs a tighter, cost-consistent informed set, performs
a nearest-neighbour search under an anisotropic distance metric, and evaluates
edge costs efficiently via a cascading scheme.
\rev{We further introduce a Collision-Adaptive Metric Refinement (CARM),}
\rev{which learns an obstacle-proximity cost field online from collision feedback,}
reducing the reliance on prior metric design in practical settings.
Experiments across environments from 2-D to 14-D show that RIT* is competitive in low-dimensional and spatially constant-metric settings and produces substantially lower-cost solutions when the metric varies spatially in high-dimensional configuration spaces. Performance gains scale with anisotropy and dimension, reaching up to 13.0\% improvement in median initial cost over BIT* in the 3-D anisotropic benchmark, up to 9.0\% in median final cost over BIT* in 6-DOF manipulation, and 24.8-63.5\% in a 14-DOF bimanual planning problem, where Euclidean-informed baselines degrade. Videos and code can be found here: \textcolor{blue}{~\url{https://muhayyuddin.github.io/ritstar/}}.
\end{abstract}

\begin{IEEEkeywords}
Motion planning, Riemannian geometry, asymptotic optimality,
informed sampling, online metric learning.
\end{IEEEkeywords}
\vspace{-4mm}

\section{Introduction}\label{sec:intro}
Sampling-based motion planning enables collision-free path computation in high-dimensional configuration spaces~\cite{LaValle2006}. Asymptotically optimal (AO) planners such as RRT*~\cite{Karaman2011} and PRM* converge to an optimal path, and later work improved convergence via \emph{informed sampling}: once an initial solution is found, samples are drawn solely from the \emph{informed set}, a prolate hyperspheroid in Euclidean space~\cite{Gammell2018}. This idea underpins Informed~RRT*~\cite{Gammell2018}, BIT*~\cite{Gammell2020}, AIT*, EIT*~\cite{Strub2022}, and APT*~\cite{APT2025}.

These planners, however, select this set using Euclidean distance, which does not reflect the true cost structure in anisotropic environments. In addition, they typically rely on a predetermined cost structure rather than learning it from planning experience, reducing their effectiveness when the cost landscape is unknown or environment-dependent.

\begin{figure}[t]
\centering
\includegraphics[width=0.72\columnwidth]{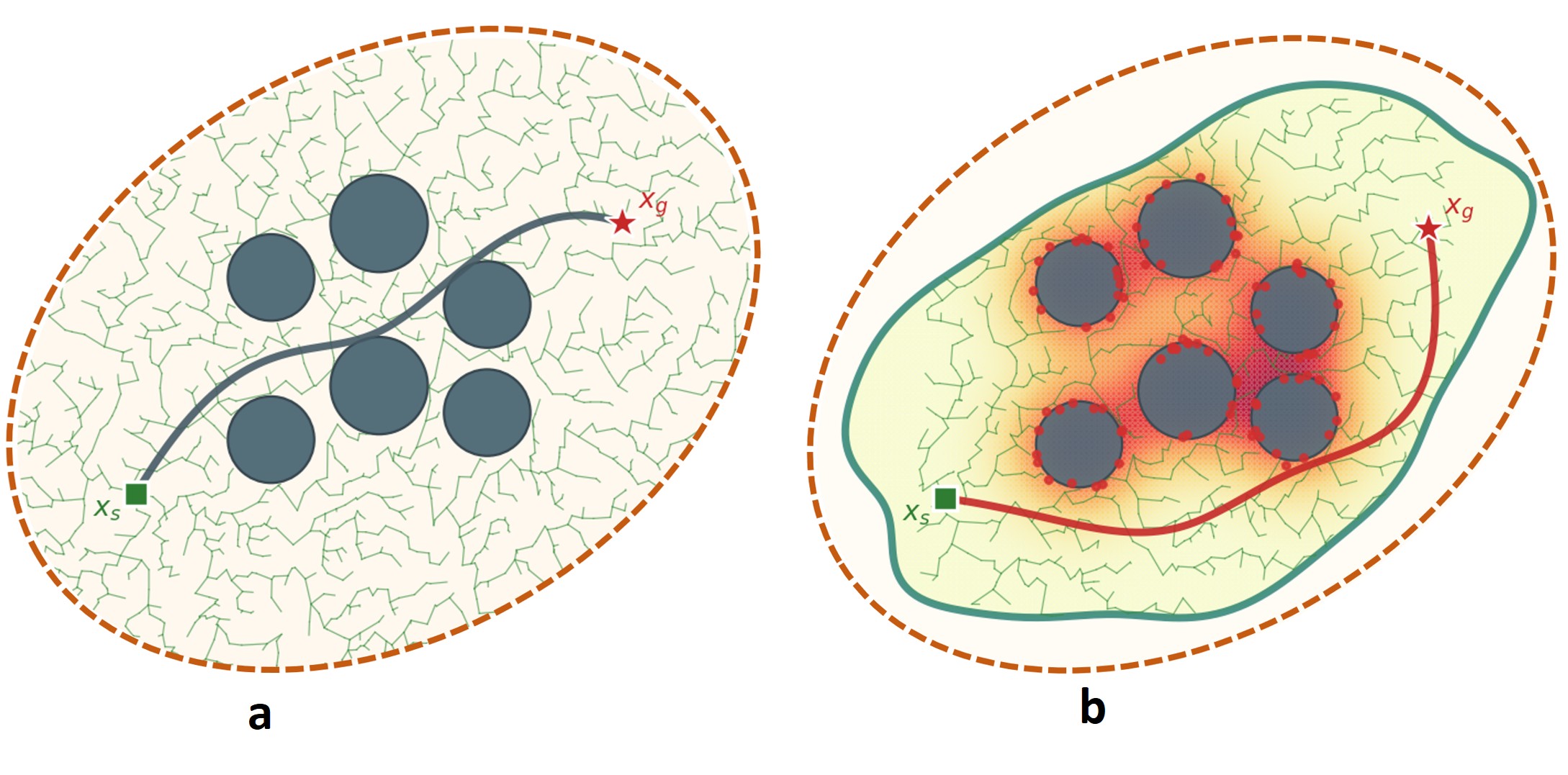}
\caption{Overview of RIT*. \textbf{(a)}~BIT* samples from the Euclidean informed set~$\IE$, ignoring anisotropic cost structure. \textbf{(b)}~RIT* uses a cost-aligned Riemannian informed set; CARM (red shading) learns a cost field from collision feedback to guide paths away from obstacles.}
\vspace{-5mm}
\label{fig:abstract}
\end{figure}

In this work, we address both limitations with RIT*. We replace
Euclidean components with a Riemannian formulation, where a spatially
varying metric tensor~$G(x)$ captures the underlying cost structure. We further introduce a Collision-Adaptive Metric Refinement (CARM), which learns a collision-induced conformal cost field while planning from collision feedback and
adapts the planner accordingly (Fig.~\ref{fig:abstract}).

\textbf{Contributions:}
The key contributions are:
\begin{enumerate}

\item \textit{Riemannian informed planning:}
RIT* replaces Euclidean components with Riemannian ones: the informed set aligns with the underlying cost structure, nearest neighbours use an anisotropic metric, and edge costs are computed via a cascading scheme, focusing sampling on relevant regions and improving efficiency.
\item \textit{Online proximity-aware metric refinement via CARM:}
CARM learns an obstacle-proximity cost field from collision feedback and applies it as a conformal scaling of a base metric, requiring no prior obstacle knowledge: given a base metric (e.g., joint-space inertia) it adds proximity-aware structure; without one, it defaults to Euclidean and learns from scratch.
\item \textit{Experimental validation:}
We evaluate RIT* from \mbox{2-D}/\mbox{3-D} to high-dimensional tasks (6-DOF, 14-DOF), including a real UR10e arm, with a comprehensive benchmark against asymptotically optimal planners.
\end{enumerate}
\vspace{-3mm}
\section{Related Work}\label{sec:related}

We review prior work on informed sampling-based planning, the use of
Riemannian geometry in motion planning, and obstacle-aware cost design.

\textit{Euclidean informed sampling planners:}
Sampling-based planners have been improved by restricting search to an
informed subset of the space. Gammell~et~al.~\cite{Gammell2018}
introduced direct sampling of a prolate hyperspheroid to accelerate
RRT*. BIT*~\cite{Gammell2020} combines batch sampling with informed sets
and lazy edge evaluation, while AIT* and EIT*~\cite{Strub2022} extend
this with asymmetric bidirectional search. APT*~\cite{APT2025} shapes
nearest-neighbour search using virtual forces and adapts batch size, and
FIT*~\cite{FIT2024} improves early convergence through batch adaptation.
Despite these advances, all define the informed set, nearest-neighbour
search, and edge cost in \emph{Euclidean} distance.

\textit{Riemannian geometry in motion planning.}
Riemannian geometry naturally models anisotropic costs, but its use is
largely limited to local or optimisation methods. CHOMP~\cite{Ratliff2009}
and STOMP~\cite{Kalakrishnan2011stomp} optimise trajectories under such costs, but
are not sampling-based and lack AO guarantees. In sampling-based
planning, Kim~et~al.~\cite{Kim2016} use Riemannian metrics for local
steering, and Kyaw and Kelly~\cite{Kyaw2026} propose a Riemannian RRT*
using retractions and natural gradients. However, these methods are
incremental, do not construct informed sets, and do not address global
sampling. Bi-AM-RRT*~\cite{Zhang2023biamrrt} similarly biases
nearest-neighbour selection via a diffusion-based assisting metric, but does not
integrate informed sampling or batch processing, \resp{its metric serves as a search heuristic for fast (re)planning in dynamic
2-D navigation rather than as the cost functional being optimised. The
cost-space RRT literature, notably T-RRT on configuration-space
costmaps~\cite{Jaillet2010}, biases exploration through stochastic
transition tests over a scalar cost field. These planners handle general
costmaps but retain Euclidean distances, are not informed, and provide no
asymptotic-optimality guarantees.}.

\textit{Obstacle-aware Riemannian metrics.}
Several works incorporate obstacle information into metric design but
rely on prior knowledge or demonstrations: Klein~et~al.~\cite{Klein2023}
design barrier-based metrics requiring full obstacle geometry,
RMPflow~\cite{Cheng2018rmpflow} composes task-space metrics for reactive
control, and Beik-Mohammadi et~al.~\cite{BeikMohammadi2023} learn
Riemannian manifolds from demonstrations. None is integrated with
sampling-based planning, and all require information beyond collision
checking.

\textit{Key distinction:}
Prior Riemannian approaches use the metric for \emph{local} operations
(steering, trajectory optimisation, reactive control). RIT* instead
integrates Riemannian geometry into the \emph{global} informed sampling
pipeline, replacing the informed set, nearest-neighbour search, edge
cost evaluation, and rewiring of a batch-informed planner with
Riemannian counterparts, and incorporates CARM, which learns a conformal
scaling online from collision feedback, enabling cost-aware planning
without prior environment knowledge.
\vspace{-2mm}
\section{Problem Formulation}\label{sec:problem}

We consider optimal motion planning where the cost of motion is
anisotropic and may vary across the space; such costs are naturally
described by a Riemannian metric assigning a direction-dependent
traversal cost at each configuration. Let $\Cspace \subseteq \mathbb{R}^d$ denote a $d$-dimensional
configuration space, and let $\Cfree \subseteq \Cspace$ be the subset of
collision-free configurations. The space is equipped with a smooth,
positive-definite metric tensor field
$G : \Cspace \to \SPD$, where $\SPD$ denotes the set of symmetric
positive-definite matrices. For any configuration $x \in \Cspace$, the
matrix $G(x)$ defines the local cost of moving through $x$, so that
different directions can have different traversal costs.

Under this metric, the cost of a path $\sigma : [0,1] \to \Cspace$ is
its Riemannian arc length,
\begin{equation}\label{eq:cost}
  c(\sigma) = \int_0^1 \!\sqrt{\dot{\sigma}(t)^\top G\bigl(\sigma(t)\bigr)\, \dot{\sigma}(t)}\; dt,
\end{equation}
where $\dot{\sigma}(t)$ is the tangent vector of the path. The resulting
\emph{geodesic distance} $\dR(x,y)$ is defined as the minimal cost of any path between the two
configurations $x$ and $y$, i.e.,
$\dR(x,y) = \inf_\sigma c(\sigma)$ taken over all paths $\sigma$ that connect
$x$ and $y$. In the special case where $G(x) = I_d$, this corresponds to the usual Euclidean
distance.

Given a start state $x_s \in \Cfree$ and a goal state $x_g \in \Cfree$,
our objective is to find a collision-free path of minimum cost:
\begin{equation}\label{eq:opt}
  \sigma^* = \argmin_{\sigma \,:\, x_s \to x_g,\; \sigma \subset \Cfree} c(\sigma),
\end{equation}
A planner is \emph{asymptotically optimal} (AO) if the cost $c_n$ of its
returned solution converges almost surely to the optimal cost $c^*$ as
the number of samples $n \to \infty$.

After a feasible solution with cost $c_{\mathrm{best}}$ is found, the
search can be restricted to configurations that may still generate a
better path. In Euclidean informed planning, this region is the
\emph{informed set}
\begin{equation}\label{eq:IE}
  \IE = \bigl\{ x \in \Cspace : \|x_s - x\|_2 + \|x - x_g\|_2 \le c_{\mathrm{best}} \bigr\},
\end{equation}
For anisotropic costs, however, Euclidean distance does not accurately
reflect the true cost-to-come and cost-to-go. We therefore define the
\emph{Riemannian informed set} as
\begin{equation}\label{eq:IR}
  \IR = \bigl\{ x \in \Cspace : \dR(x_s, x) + \dR(x, x_g) \le c_{\mathrm{best}} \bigr\}.
\end{equation}
This set contains configurations that can still improve the current solution under the Riemannian cost. When $G \succeq I_d$, $d_R(x,y)\ge \|x-y\|_2$ and hence $\mathcal{I}_R \subseteq \mathcal{I}_E$; more generally, under the bi-Lipschitz bounds of Sec.~\ref{sec:analysis} the two sets are equivalent up to constant-factor distortion, with $\mathcal{I}_R$ aligned to the cost structure.

Our goal is an AO informed planner operating natively under the Riemannian cost: sampling from $I_R$, selecting neighbours and evaluating edges via $d_R$, and, when the available metric is absent or does not encode obstacle proximity, refining it online from collision feedback.

\begin{remark}{Numerical geodesic cost and admissible heuristic:}\label{rem:slc}
Since $\dR$ has no closed form for general $G$, RIT* evaluates the arc-length of a straight-line edge $\gamma(t)=u+t\Delta$, $\Delta=v-u$, via $n$-point Gauss-Legendre (GL-$n$) quadrature,
\begin{equation}\label{eq:glquad}
  c_R(u,v)\approx\sum_{k=1}^{n}w_k\sqrt{\Delta^\top G(u+t_k\Delta)\,\Delta},
\end{equation}
where $(t_k,w_k)_{k=1}^n$ are GL nodes and weights on $[0,1]$ (default $n=10$, configurable). For spatially constant metrics (e.g.\ $G=\mathrm{diag}(w)$), eq.~\eqref{eq:glquad} is exact. The planning heuristic is $\hat{h}_R(v)=\sqrt{\Delta^\top G\,\Delta}$ for constant $G$, and $\hat{h}_R(v)=\sqrt{\lambda_{\min}}\|v-x_g\|_2$ for spatially varying $G$, where $\lambda_{\min}=\min_x\lambda_{\min}(G(x))$; admissibility follows from $\dR(x,y)\ge\sqrt{\lambda_{\min}}\|x-y\|_2$.
We write $c_R$ for the value~\eqref{eq:glquad} computes (the superscript $c_R^{(n)}$ is retained only in Lemma~\ref{lem:edgebias}); since edges are straight lines, the tree-accumulated $\sum c_R$ converges to $d_R$ under densification, so the implementation rewiring and pruning criteria are asymptotically equivalent to their $d_R$ ideal forms.
\end{remark}

\vspace{-3mm}
\section{Method}\label{sec:method}
Batch Informed Trees~\cite{Gammell2020} is an asymptotically optimal planner that incrementally builds a tree from batches of samples drawn from an informed set; within each batch, vertices and edges are ordered by cost-to-come and heuristic estimates and evaluated lazily, balancing global exploration with goal-directed search.

RIT* extends this framework with three modifications and a learning component: i) the informed set is defined by a Riemannian metric, producing a cost-aligned sampling region; ii) nearest-neighbour search uses Riemannian distance, enabling connections along low-cost directions; iii) edge costs are approximated via a cascading quadrature scheme; and CARM learns a conformal scaling online from collision feedback.

Algorithm~\ref{alg:ritstar} summarises the procedure. The planner takes as input a start-goal pair $(x_s, x_g)$, free space~$\Cfree$, a metric tensor~$G$ (or~$I_d$ if unknown), batch size~$B$, iteration budget~$T$, and CARM parameters: update interval~$K_c$, bandwidth~$\sigma$, and inflation strength~$\alpha$. A metric cache~$\mathcal{M}$ is precomputed on a grid for $O(1)$ metric lookups. Within each batch, vertices are processed in order of $f(v)=g(v)+\hat{h}_R(v)$ (line~\ref{line:order}), where $g(v)$ is the Riemannian cost-to-come and $\hat{h}_R(v)=\bar{c}(v,x_g)$ is a cached Riemannian distance heuristic (Remark~\ref{rem:slc}). Components Riemannian sampling (line~\ref{line:sample}), cascading evaluation (line~\ref{line:cascade}), rewiring, pruning (lines~\ref{line:prune},~\ref{line:cascade}), and CARM updates (line~\ref{line:carmcheck}) are detailed later.

\begin{algorithm}[t]
\scriptsize
\SetAlgoLined
\KwIn{ $x_s, x_g, \mathcal{C}_\text{free}, G(\cdot), B, T, K_c, \sigma, \alpha$}
\KwOut{Optimal path $\sigma^*$}
$\mathcal{V} \gets \{x_s\}$; $\mathcal{E} \gets \emptyset$; $c_{\mathrm{best}} \gets \infty$; $\mathcal{C}_{\mathrm{col}} \gets \emptyset$\; \nllabel{line:init}
\For{$t = 1, \ldots, T$}{ \nllabel{line:loop}
  $X_{\mathrm{new}} \gets \textsc{RiemannianInfSample}(B, c_{\mathrm{best}}, G)$\\ \nllabel{line:sample}
  $\mathcal{V} \gets \mathcal{V} \cup X_{\mathrm{new}}$\\
  \textsc{PruneNodes}$(\mathcal{V}, c_{\mathrm{best}}, G)$\\ \nllabel{line:prune}
  Compute $r_n^R$ via~\eqref{eq:radius}\\ \nllabel{line:radius}
  \ForEach{$v \in \mathcal{V}$ in order of $f(v)$}{ \nllabel{line:order}
    $N(v) \gets$ neighbours of $v$ within $r_n^R$\\ \nllabel{line:nn}
    \ForEach{$u \in N(v)$}{
      \revB{$(c_{uv}, p_{\mathrm{col}}) \gets \textsc{CascadingEdge}(u, v, G)$}\\ \nllabel{line:cascade}
        \textsc{UpdateAndRewire}$(\mathcal{V}, \mathcal{E}, u, v, c_{uv}, r_n^R, G)$\\
        \nllabel{line:UpdateAndRewire}
        \If{$v = x_g$ \textbf{and} $g(v) < c_{\mathrm{best}}$}{$c_{\mathrm{best}} \gets g(v)$\\} \nllabel{line:improve}
        \revB{\If{$p_{\mathrm{col}} \neq \emptyset$}{$\mathcal{C}_{\mathrm{col}} \gets \mathcal{C}_{\mathrm{col}} \cup \{p_{\mathrm{col}}\}$\\}} \nllabel{line:record}
    }
  }
  \If{$t \bmod K_c = 0$ \textbf{and} $|\mathcal{C}_\text{col}| > 0$}
{
    $G \gets \textsc{CARM\_Update}(\mathcal{C}_{\mathrm{col}}, G, \sigma, \alpha)$\\
     \nllabel{line:carmcheck}
  }
}
\Return{$\sigma^*$}
\caption{RIT* planner}
\label{alg:ritstar}
\end{algorithm}
\vspace{-5mm}


\subsection{Riemannian Informed Sampling}\label{sec:whitening}


RIT* draws samples from the Riemannian informed set~$\IR$
(line~3 of \cref{alg:ritstar}), defined in~\eqref{eq:IR}.
When $G \succeq I_d$, the Riemannian distance dominates the
Euclidean one, $\dR(x,y) \ge \|x-y\|_2$, so
$\IR \subseteq \IE$.

To draw samples from~$\IR$ efficiently, we apply a metric whitening
transform. Let $\bar{G}$ be the mean of $G$ at $N_{\text{seg}}=10$
uniformly spaced points along the start-goal segment and
$L = \mathrm{chol}(\bar{G})$; both are computed once per planner
invocation (or per CARM update) and reused, so the per-sample cost is a
single $L^{-\top}$ multiplication. The change of coordinates
\revB{$\tilde{x} = L^\top x$} maps~$\IR$ to an approximate prolate hyperspheroid,
where the direct sampling of~\cite{Gammell2018} applies.
Samples are mapped back via \revB{$x = L^{-\top}\tilde{x}$}. This transformation approximates the Riemannian informed set, with accuracy governed by the smoothness of $G$.
Empirically, the median relative volume mismatch
$|\mathrm{vol}(L^{-\top}\mathcal{E}) - \mathrm{vol}(\IR^{\text{true}})|/\mathrm{vol}(\IR^{\text{true}})$
is below $1\%$ on our 3-D, 6-D, and 14-D benchmarks and below $3\%$ under mild spatial metric variation; in a controlled sweep with $G(x)=\mathrm{diag}(1,3,6)(1+\beta\|x-x_c\|^2)$, $\beta\!\in\!\{0,0.5,1,2\}$, median mismatch grows smoothly from $<\!0.5\%$ ($\beta\!=\!0$) to $2.7\%$ ($\beta\!=\!1$) and $6.1\%$ ($\beta\!=\!2$), i.e.\ the whitening degrades gracefully with $\|\nabla G\|$.
\begin{figure}[t]
\centering
\includegraphics[width=0.94\columnwidth]{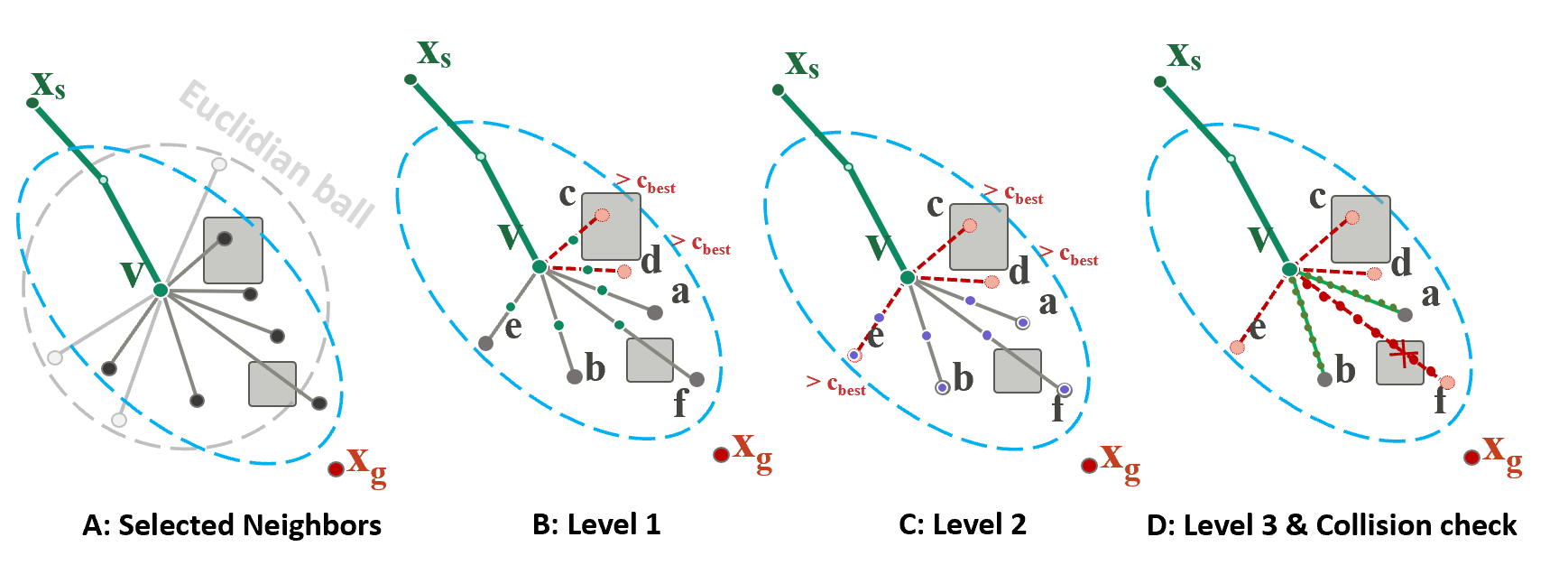}
\caption{Riemannian NN selection and cascading edge evaluation in RIT*. (a) The Riemannian ball expands along low-cost directions vs.\ the Euclidean ball. (b-d) Cascading filter: most edges are rejected early, only survivors receive full-precision evaluation.}
\label{fig:neighbour}
\vspace{-4mm}
\end{figure}
\vspace{-4mm}
\subsection{Riemannian Nearest-Neighbour Search}\label{sec:nn}

Let $r_n^R$ be the neighbour connection radius for $n$ samples;
RIT* uses the Riemannian ball $\{u : \dR(u,v) \le r_n^R\}$, an ellipsoid
aligned with the local metric that expands along low-cost directions and
contracts along high-cost ones, matching the traversal cost structure.
The connection radius is
\begin{equation}\label{eq:radius}
  r_n^R = \gamma_R \!\left(\frac{\log n}{n}\right)^{\!1/d}\!,\quad
  \gamma_R = 2\!\left(1+\frac{1}{d}\right)^{\!1/d}
             \!\left(\frac{\muR(\IR)}{\zeta_d}\right)^{\!1/d},
\end{equation}
where $\muR(\IR)$ is the Riemannian volume of~$\IR$ and $\zeta_d$ is
the volume of the unit $d$-ball. Because $\muR(\IR) < \mu(\IE)$, this
radius is typically smaller than its Euclidean counterpart, reducing the
number of candidate edges. In practice we evaluate $\muR(\IR)\approx\det(\bar G)^{1/2}\,\mu(\IE)$, the Jacobian change-of-variables under whitening (Sec.~\ref{sec:whitening}); this is exact for spatially constant $G$ and incurs an error of $O(\|\nabla G\|_\infty\,\mathrm{diam}(\IR))$ otherwise, quantified empirically in Sec.~\ref{sec:setup}.
\vspace{-3mm}
\subsection{Cascading Edge Evaluation}\label{sec:cascade}

Evaluating the Riemannian edge cost requires numerical integration
of~\eqref{eq:cost} with multiple metric-tensor evaluations. Since most
candidate edges cannot improve the current solution, RIT* screens them
through a three-level cascade of increasing accuracy and cost, rejecting
unpromising edges early (line~\ref{line:cascade} of \cref{alg:ritstar}).

For a candidate edge from parent~$u$ to vertex~$v$, let
\mbox{$\Delta = v - u$} and $m = (u+v)/2$. An edge is discarded whenever
$g(u) + \hat{c}(u,v) > c_{\mathrm{best}}$, where $g(u)$ is the
cost-to-come from~$x_s$ to~$u$ along the current tree.

\textit{L1: Midpoint check}:
A single cached metric lookup at the midpoint $m=(u+v)/2$ gives the fast estimate
\begin{equation}\label{eq:l1}
  \hat{c}_1 = \bar c(u,v) = \sqrt{\Delta^\top G(m)\,\Delta}.
\end{equation}
This L1 estimate serves \emph{only} as a rapid first-pass filter; the exact arc-length is computed at L3 via the Gauss-Legendre quadrature (Remark~\ref{rem:slc}).
$L_1$ rejects the majority of candidates across our
2-D to 14-D benchmarks (78-85\%; 80\% in the \revB{6-D UR10} environment).

\textit{L2: Simpson estimate}:
Surviving edges are re-evaluated with Simpson's rule:
\begin{equation}\label{eq:simpson}
  \hat{c}_2 = \frac{\|\Delta\|}{6}\Bigl(
    \sqrt{\Delta^\top G(u)\,\Delta}
    + 4\sqrt{\Delta^\top G(m)\,\Delta}
    + \sqrt{\Delta^\top G(v)\,\Delta}\Bigr),
\end{equation}
which captures metric variation along the edge more accurately.
This level rejects approximately 75\% of the edges that passed L1.

\textit{L3: Full quadrature and collision check}:
Only the roughly 5\% of edges surviving both levels receive 10-point
Gauss-Legendre integration followed by collision checking. 
Fig.~\ref{fig:neighbour} illustrates the cascade: of seven candidate
neighbours, two are rejected at L1, one at L2, and one at L3 due to
collision, leaving three accepted edges.

\resp{Unlike the L3 quadrature (Lemma~\ref{lem:edgebias}), the L1/L2
estimates~\eqref{eq:l1}-\eqref{eq:simpson} are not lower bounds on the true
edge cost and could, in principle, discard an improving edge when $G$
varies sharply along it. The cascade thresholds are therefore
deflated: an edge is rejected at L1/L2 only if
$g(u)+\eta_e^{-1}\hat c>c_{\mathrm{best}}$, where
$\eta_e=\sqrt{\lambda^e_{\max}/\lambda^e_{\min}}$ is computed from cached
metric eigenvalue bounds over the cells traversed by the edge.
Since every integrand sample of~\eqref{eq:cost}, and hence
$\hat c_1$, $\hat c_2$, and $c_R(u,v)$, lies in
$[\sqrt{\lambda^e_{\min}}\|\Delta\|,\sqrt{\lambda^e_{\max}}\|\Delta\|]$,
we have
$\eta_e^{-1}\hat c\le\sqrt{\lambda^e_{\min}}\|\Delta\|\le c_R(u,v)$:
thus, no improving edge is ever falsely rejected. With the bounded edge
lengths of Table~\ref{tab:environments} and smooth $G$,
$\eta_e\!\to\!1$ as $\|\Delta\|\!\to\!0$, so the deflation is mild in
practice and vanishes under densification; the quoted rejection rates
are reported with this safeguard in place.}

\vspace{-2mm}
\subsection{Tree Rewiring and Pruning}\label{sec:rewire}
\textit{Rewiring} (line~\ref{line:UpdateAndRewire} of \cref{alg:ritstar})\textit{:}
When a new vertex~$v$ is added to the tree, it may provide a
cheaper path to existing nearby vertices. For each
neighbour~$u$ within the Riemannian radius~$r_n^R$, RIT* checks whether the path through~$v$ is
cheaper than the current parent of~$u$:
\begin{equation}\label{eq:rewire}
  g(v) + c_R(v,u) < g(u),
\end{equation}
where $g(\cdot)$ is the Riemannian cost-to-come from~$x_s$ along the
current tree and $c_R(v,u)$ is the Riemannian edge cost~\eqref{eq:cost}.
If the condition holds, $u$ is reconnected as a child of~$v$ and the
cost-to-come of all descendants is updated. Because the comparison
uses~$c_R$ rather than Euclidean distance, rewiring respects the
anisotropic cost structure and discovers connections that Euclidean
planners would overlook.

\textit{Pruning} (line~\ref{line:prune} of \cref{alg:ritstar})\textit{:}
After each batch, RIT* removes vertices that can no longer
contribute to an improved solution. A vertex~$x$ is pruned if
its best possible cost-to-come plus cost-to-go exceeds the
current best solution cost:
\begin{equation}\label{eq:prune}
  \dR(x_s, x) + \dR(x, x_g) > c_{\mathrm{best}}.
\end{equation}
In practice, $\dR$ in~\eqref{eq:prune} is evaluated from the accumulated L3 edge costs $c_R$ already computed during tree growth; by Remark~\ref{rem:slc} and Lemma~\ref{lem:edgebias}, $c_R\to\dR$ under densification, preserving asymptotic optimality (Theorem~\ref{thm:ao}). Pruning reduces the
tree size and nearest-neighbour search cost,  and when CARM
updates the metric (line~\ref{line:carmcheck} of \cref{alg:ritstar}), it forces re-evaluation of
vertex membership under the tightened~$\IR$, ensuring the
tree adapts to the evolving cost landscape (Fig.~\ref{fig:rewire_prune}).
\begin{figure}[t]
\centering
\includegraphics[width=0.78\columnwidth]{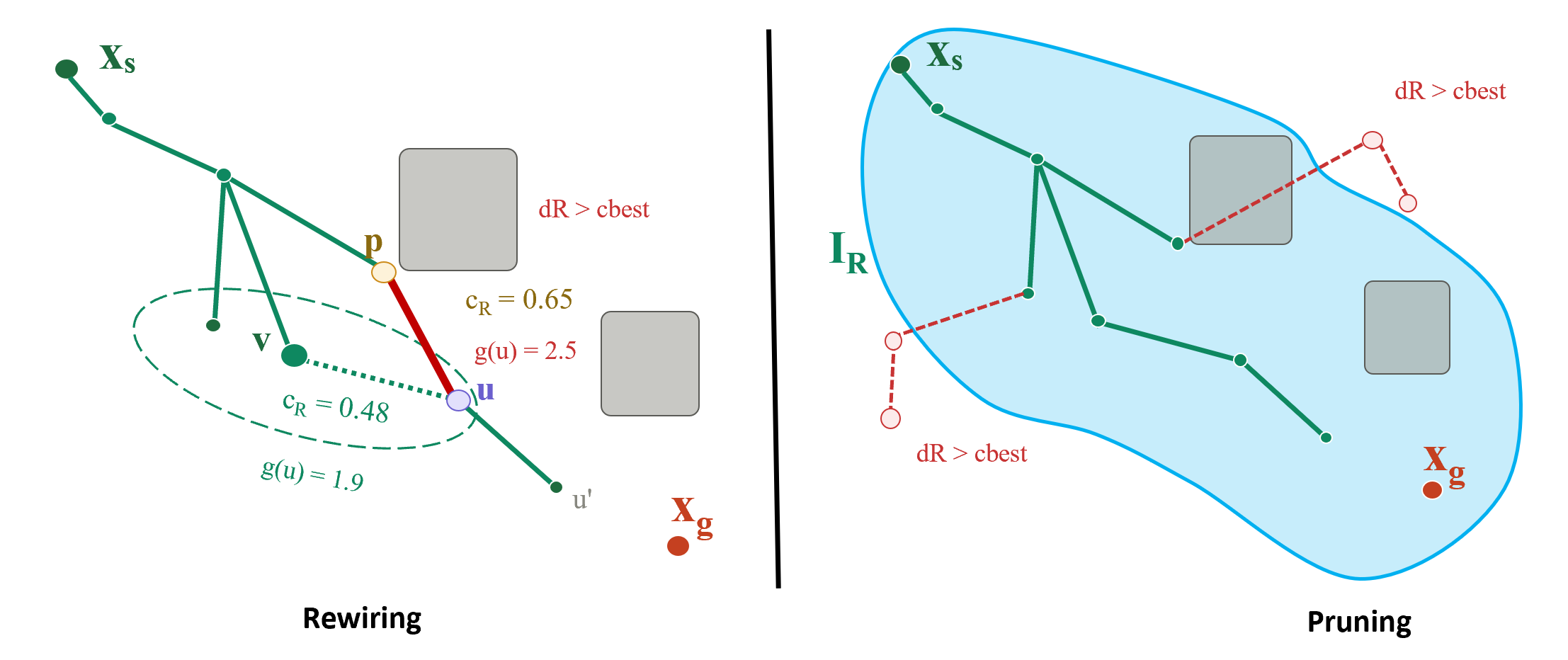}
\caption{Tree rewiring and pruning in RIT*. (a) Rewiring switches a vertex's parent when a lower-cost Riemannian path is found. (b) Pruning removes vertices outside the Riemannian informed set.}
\label{fig:rewire_prune}
\vspace{-6mm}
\end{figure}
\vspace{-8mm}

\subsection{Collision-Adaptive Riemannian Metric (CARM)}\label{sec:carm}

\resp{CARM is an online \emph{refinement} of an initial base metric: the planner
starts from a base metric $G$, which defaults to the identity $I_d$ when no
prior metric is available, and CARM refines it from collision feedback. The
metric $G$ appearing as an input to algorithm 2  is thus the current (possibly default) metric being refined, not a prerequisite.} CARM
takes four inputs: a set of collision
points~$\mathcal{C}_{\mathrm{col}} = \{c_i\}_{i=1}^{N}$
collected from failed edge checks, the current
metric~$G$, a kernel bandwidth~$\sigma$ that controls
spatial resolution, and an inflation strength~$\alpha$ that
controls how aggressively traversal cost increases near
obstacles.

Failed edge checks naturally reveal points near obstacle boundaries;
as they accumulate, they form a sparse map of regions where traversal
should be expensive. CARM converts this map into a density field via a
Gaussian kernel density estimate (KDE),
\begin{equation}\label{eq:kde}
  \hat{f}(x) = \frac{1}{N}\sum_{i=1}^{N}
  \exp\!\Bigl(-\frac{\|x-c_i\|^2}{2\sigma^2}\Bigr),
\end{equation}
and applies it as a conformal scaling of the base metric
(lines~1-3 of \cref{alg:carm}):
\begin{equation}\label{eq:carm}
  G_{\mathrm{CARM}}(x) = s(x)\, G(x), \quad
  s(x) = 1 + \alpha\, \hat{f}(x).
\end{equation}
Because $s(x)$ is scalar, $G_{\mathrm{CARM}}$ is a
\emph{conformal} refinement: CARM learns an obstacle-proximity
inflation field, not an anisotropic tensor structure. Directional cost variation, when present, must come from the base metric $G(x)$; this
is appropriate for collision-based learning because collisions carry
positional information but not directional information.
Once returned, $G_{\mathrm{CARM}}$ overwrites~$G$ in the main loop
(line~\ref{line:carmcheck} of \cref{alg:ritstar}), so all subsequent
sampling, edge evaluation and pruning use the refined metric. Since
$G_{\mathrm{CARM}}(x) \succeq G(x)$, traversal cost increases and the
informed region contracts, preserving completeness and asymptotic
optimality (Corollary~\ref{cor:carm}). After each update the algorithm
recomputes the metric cache, shrinks~$\IR$, and prunes vertices outside
the tightened informed set (lines~4-6 of \cref{alg:carm}), creating a
feedback loop: collisions refine the metric, the refined metric focuses
sampling, and focused sampling generates more informative collisions.

\begin{algorithm}[t]
\scriptsize
\SetAlgoLined
\KwIn{Collision points $\mathcal{C}_{\mathrm{col}} = \{c_i\}_{i=1}^{N}$,
  current metric $G$, bandwidth $\sigma$, strength $\alpha$}
\KwOut{Updated metric $G_{\mathrm{CARM}}$}
Build KDE: $\hat{f}(x) \gets N^{-1}\sum_{i=1}^{N} \exp\!\bigl(-\|x - c_i\|^2 / (2\sigma^2)\bigr)$\\
Compute scale field: $s(x) \gets 1 + \alpha\, \hat{f}(x)$\\
Update metric: $G_{\mathrm{CARM}}(x) \gets s(x)\, G(x)$\\
Recompute metric cache at grid points\\
Recompute $\IR$ with $G_{\mathrm{CARM}}$\\
\Return{$G_{\mathrm{CARM}}$}
\caption{\textsc{CARM\_Update}($\mathcal{C}_{\mathrm{col}}, G, \sigma, \alpha$)}
\label{alg:carm}
\end{algorithm}

CARM is triggered every $K_c$ iterations
(line~\ref{line:carmcheck} of \cref{alg:ritstar}) when the collision
buffer is non-empty. As \cref{fig:carm_evolution} shows, the KDE field
stabilises after ${\sim}20$~iterations (${\sim}400$ collision points in
2-D). We use $\sigma = 0.1$, $\alpha = 10$, and $K_c = 15$ throughout;
Sec.~\ref{sec:carm_analysis} reports sensitivity to these choices.

\begin{table}[t]
\centering
\footnotesize 
\caption{Experimental environment summary.
$d$: dimension; $\kappa$: metric condition number;
$\ell_{\max}$: max edge length;
$n_c$: collision checks per edge.}
\label{tab:environments}
\setlength{\tabcolsep}{3pt}

\begin{tabular}{@{}l c c c r r@{}}
\toprule
\textbf{Environment} & $d$ & $\kappa$ 
  & $\ell_{\max}$ & $n_c$ \\
\midrule
2-D Random World       & 2  & 1                 & $0.05$       & 10 \\
3-D Diagonal           & 3  & $\approx 6$       & $0.05$       & 10 \\
6-D UR10               & 6  & $\approx 12$      & $0.10$\,rad  & 10 \\
14-D Tiago Bimanual    & 14 & $\approx 18$      & $0.10$\,rad  & 10 \\
\bottomrule
\end{tabular}
\vspace{-5mm}
\end{table}
\begin{table*}[t]
\centering
\caption{Benchmark Evaluations. Per environment: $t^{\min}_{\mathrm{init}}$, $t^{\mathrm{med}}_{\mathrm{init}}$ (min/median time to first solution, s), $c^{\mathrm{med}}_{\mathrm{init}}$ (median initial cost), and $c^{\mathrm{med}}_{\mathrm{fin}}$ (median final cost). \textbf{Bold}: best per column.}
  
\label{tab:bench_compact}
\resizebox{\textwidth}{!}{%
\begin{tabular}{@{}l
  |cccc|cccc|cccc|cccc|
  c@{}}
\toprule
 & \multicolumn{4}{c|}{\textbf{2-D Random World ($\mathbb{R}^2$)}}
 & \multicolumn{4}{c|}{\textbf{3-D Diagonal ($\mathbb{R}^3$)}}
 & \multicolumn{4}{c|}{\textbf{6-D UR10 ($\mathbb{R}^6$)}}
 & \multicolumn{4}{c|}{\textbf{14-D Tiago Biman.\ ($\mathbb{R}^{14}$)}}
 & \textbf{Succ.} \\
\textbf{Planner}
 & $t^{\min}$ & $t^{\mathrm{med}}$ & $c^{\mathrm{med}}_{\mathrm{init}}$ & $c^{\mathrm{med}}_{\mathrm{fin}}$
 & $t^{\min}$ & $t^{\mathrm{med}}$ & $c^{\mathrm{med}}_{\mathrm{init}}$ & $c^{\mathrm{med}}_{\mathrm{fin}}$
 & $t^{\min}$ & $t^{\mathrm{med}}$ & $c^{\mathrm{med}}_{\mathrm{init}}$ & $c^{\mathrm{med}}_{\mathrm{fin}}$
 & $t^{\min}$ & $t^{\mathrm{med}}$ & $c^{\mathrm{med}}_{\mathrm{init}}$ & $c^{\mathrm{med}}_{\mathrm{fin}}$
 & (\%) \\
\midrule
RIT*
 & 0.013 & 0.020 & \textbf{0.9578} & 0.9135
 & \textbf{0.021} & \textbf{0.040} & \textbf{2.4923} & \textbf{2.1608}
 & \textbf{0.0601} & 0.0915 & \textbf{1.9143} & \textbf{1.4143}
 & 0.8135 & 3.4255 & \textbf{16.5747} & \textbf{8.5778}
 & 100 \\
BIT*
 & 0.0183 & 0.0308 & 0.9698 & \textbf{0.9108}
 & 0.0476 & 0.0698 & 2.8167 & 2.2027
 & 0.0651 & \textbf{0.0820} & 1.9415 & 1.5415
 & \textbf{0.7657} & \textbf{0.8834} & 18.5069 & 10.7092
 & 100 \\
Inf.\ RRT*
 & 0.0129 & 0.0289 & 0.9946 & 0.9165
 & 0.0312 & 0.0464 & 2.8263 & 2.2278
 & 0.1390 & 0.2114 & 1.9541 & 1.6975
 & 1.5915 & 2.2198 & 19.9509 & 13.5773 
 & 100 \\
AIT*
 & 0.0111 & 0.0217 & 1.0475 & 0.9116
 & 0.0236 & 0.0433 & 2.9950& 2.2563
 & 0.0697 & 0.1159 & 2.7794 & 1.4588
  & 0.8070 & 1.4469 & 18.8514 & 11.0986
 & 100 \\
EIT*
 & \textbf{0.0106} & \textbf{0.0162} & 1.0285 & 0.9124
 & 0.0247 & 0.0413 & 2.8736 & 2.2449
 & 0.0705 & 0.1397 & 2.7929 & 1.4530
 & 0.8095 & 1.4452 & 19.8802 & 11.4415
 & 100 \\
APT*
 & 0.0127 & 0.0189 & 1.0946 & 0.9225
 & 0.0312 & 0.0465 & 2.7233 & 2.4436 
 & 0.1391 & 0.2126 & 2.9256 & 1.7536
 & 1.6003 & 2.2284 & 19.8939 & 14.0232
 & 100 \\
\bottomrule
\end{tabular}}
\vspace{-6mm}
\end{table*}

\vspace{-3mm}
\subsection{Theoretical Analysis}\label{sec:analysis}
 
RIT* preserves the asymptotic guarantees of BIT*~\cite{Gammell2020}; we sketch the modifications needed under a Riemannian metric and CARM.
 
\begin{assumption}\label{assum:bounded}
$G$ satisfies $\lambda_{\min}I_d \preceq G(x) \preceq \lambda_{\max}I_d$ on $\Cfree$ with $0<\lambda_{\min}\le\lambda_{\max}<\infty$ and $\|\nabla G\|_\infty<\infty$; the optimal path $\sigma^*$ has cost $c^*$ and strong $\delta$-clearance. For inertia metrics $G(q)=M(q)$ this holds when the configuration bounds exclude kinematic singularities; reported $\kappa=\lambda_{\max}/\lambda_{\min}$ is the maximum over $10^4$ grid samples.
\end{assumption}
 
Bounding~\eqref{eq:cost} by the extreme eigenvalues gives the bi-Lipschitz equivalence.
\begin{equation}\label{eq:bilip}
\sqrt{\lambda_{\min}}\,\|x-y\|_2 \le \dR(x,y) \le \sqrt{\lambda_{\max}}\,\|x-y\|_2,
\end{equation}

\begin{lemma}[Convergence of GL-$n$ arc-length quadrature]\label{lem:edgebias}
For any $u,v\in\Cfree$ and smooth metric $G$, the $n$-point Gauss-Legendre estimate $c_R^{(n)}(u,v)$ defined in~\eqref{eq:glquad} satisfies
\begin{equation}\label{eq:edgebias}
|c_R^{(n)}(u,v) - c_R(u,v)| = O(\|v-u\|^{2n+1}),
\end{equation}
where $c_R(u,v)$ is the true arc-length of the straight-line edge and the $O(\cdot)$ constant depends on the $(2n)$-th derivative of $t\mapsto\sqrt{\Delta^\top G(u+t\Delta)\,\Delta}$ along the edge. The GL-$n$ rule is exact for polynomials of degree $\le 2n-1$; for default $n=10$ this yields a 20th-order method. For spatially constant metrics, the integrand is constant and $c_R^{(n)}=c_R$ exactly for any $n\ge 1$.
Hence $c_R^{(n)}\to c_R$ as $\|v-u\|\to 0$ at rate $O(\|v-u\|^{2n+1})$, so the L3-evaluated pruning and rewiring are asymptotically exact, preserving Theorem~\ref{thm:ao}.
\end{lemma}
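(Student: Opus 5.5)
We reduce the statement to the classical Gauss--Legendre remainder theorem on $[0,1]$, and then track how the derivatives of the integrand scale with the edge length. Write $\Delta=v-u$, $h=\|\Delta\|$, and $\hat\Delta=\Delta/h$. Define the integrand
\[
\phi(t)=\sqrt{\Delta^\top G(u+t\Delta)\,\Delta}=h\,\psi(t),\qquad \psi(t)=\sqrt{\hat\Delta^\top G(u+t\hat\Delta h)\,\hat\Delta}.
\]
Then $c_R(u,v)=\int_0^1\phi(t)\,dt$ and $c_R^{(n)}(u,v)=\sum_k w_k\phi(t_k)$.

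\textbf{Step 1: quadrature remainder.} The $n$-point Gauss--Legendre rule on $[0,1]$ is exact for polynomials of degree $\le 2n-1$. Its remainder for a $C^{2n}$ integrand is therefore
\[
E_n(\phi)=\frac{(n!)^4}{(2n+1)\,((2n)!)^3}\,\phi^{(2n)}(\xi)\quad\text{for some }\xi\in(0,1).
\]
This gives directly the "exact for degree $\le 2n-1$" and "$20$th order for $n=10$" parts of the statement.

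\textbf{Step 2: smoothness of the integrand.} By Assumption~\ref{assum:bounded}, $\hat\Delta^\top G\hat\Delta\ge\lambda_{\min}>0$. The square root is therefore evaluated away from zero, so $\psi$ is as smooth as $G$ and is bounded below by $\sqrt{\lambda_{\min}}$.

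\textbf{Step 3: scaling of derivatives (the main obstacle).} Each $t$-derivative of $\psi$ produces, by the chain rule, a factor $h$ times a spatial derivative of $G$ in direction $\hat\Delta$. By Faà di Bruno applied to $s\mapsto\sqrt{s}$ composed with $t\mapsto\hat\Delta^\top G(u+th\hat\Delta)\hat\Delta$, we get
\[
|\psi^{(2n)}(t)|\le C_n\,h^{2n},
\]
where $C_n$ depends on $\lambda_{\min}$ and on $\sup\|D^jG\|$ for $j\le 2n$ over the segment. Hence $|\phi^{(2n)}|\le C_n h^{2n+1}$, and combining with Step 1 gives $|c_R^{(n)}-c_R|=O(h^{2n+1})$, which is \eqref{eq:edgebias}.

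The delicate points are these. "Smooth" must be read as $G\in C^{2n}$ with bounded derivatives on a compact neighbourhood of $\Cfree$; this strengthens the assumption $\|\nabla G\|_\infty<\infty$ of Assumption~\ref{assum:bounded}. The positive lower bound $\lambda_{\min}$ is also essential, since it controls the negative powers $s^{1/2-m}$ in Faà di Bruno uniformly. Without it the constant could blow up and the rate would fail.

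\textbf{Step 4: constant metrics and consequences.} If $G$ is spatially constant, $\phi$ is constant in $t$. Since $\sum_k w_k=1$, the rule is exact for every $n\ge1$.

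The concluding sentence follows because edges in the tree have length at most the connection radius $r_n^R\to0$ (and at most the bounded $\ell_{\max}$). Summing the per-edge errors along a path of total length $L$ gives a total error $O(L\,(r_n^R)^{2n})\to0$. The accumulated costs used in \eqref{eq:rewire} and \eqref{eq:prune} therefore converge to their exact straight-line values. Together with Remark~\ref{rem:slc}, which handles the convergence of straight-line sums to $\dR$, this lets the asymptotic-optimality argument invoke this lemma.
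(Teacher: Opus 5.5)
Your proposal is correct and follows the same route as the paper. The paper gives no separate proof: it justifies the lemma inside the statement by appeal to the classical Gauss--Legendre remainder (exactness up to degree $2n-1$, error controlled by the $(2n)$-th derivative of the integrand), plus the observation that a constant metric gives a constant integrand. Your Step~3 usefully makes explicit what the paper leaves implicit. Writing $\phi=h\psi$ and bounding $\psi^{(2n)}=O(h^{2n})$ via Fa\`a di Bruno, using $\lambda_{\min}>0$ and $G\in C^{2n}$, is exactly where the $\|v-u\|^{2n+1}$ factor comes from; the paper's phrasing instead attributes the constant to the $t$-derivative of the unnormalised integrand, which already contains that factor.
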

 
\begin{theorem}[Probabilistic completeness]\label{thm:complete}
RIT* finds a feasible solution with probability $1$ as $n\to\infty$.
\end{theorem}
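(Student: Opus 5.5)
The plan is to reduce to the probabilistic completeness of BIT*~\cite{Gammell2020}, i.e.\ of the underlying random geometric graph. We check that each Riemannian modification (sampling, neighbour radius, cascading filter, CARM) leaves the relevant connectivity properties intact up to constants controlled by the bi-Lipschitz bounds~\eqref{eq:bilip}.

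Before a first solution is found, $c_{\mathrm{best}}=\infty$. Then \textsc{RiemannianInfSample} draws uniformly from $\Cspace$, no pruning occurs, and the cascade thresholds $g(u)+\eta_e^{-1}\hat c>c_{\mathrm{best}}$ never reject an edge. Hence only the neighbour rule and collision checks matter in this phase.

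\textbf{Step 1 (a feasible path and a ball chain).} By Assumption~\ref{assum:bounded}, $\sigma^*$ has strong $\delta$-clearance. Cover it by $M=O(c^*/(\sqrt{\lambda_{\min}}\,r))$ Euclidean balls $B_1,\dots,B_M$ of radius $r/4$ centred along $\sigma^*$. We take the Euclidean radius to be $r=r_n^R/\sqrt{\lambda_{\max}}$, shrunk if needed to stay below $\delta$. By~\eqref{eq:bilip}, any two points in consecutive balls are at Euclidean distance at most $r$, hence at Riemannian distance at most $r_n^R$. They therefore lie in each other's Riemannian neighbourhood, and the straight segment joining them stays inside the $\delta$-tube and is collision free. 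This is the only place the metric enters.

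\textbf{Step 2 (occupancy).} With $n$ uniform samples, each ball is empty with probability $(1-c\,r^d)^n$. Substituting $r^d\propto\gamma_R^d\log n/(\lambda_{\max}^{d/2}n)$ and applying a union bound over the $M$ balls gives a failure probability of order $M\,n^{-c'}$, which tends to $0$. Completeness only needs some fixed positive radius, so it suffices to note that $r_n^R\to0$ slower than the typical sample spacing. The same counting argument also works with a constant radius.

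\textbf{Step 3 (the search finds the chain).} In BIT*, every edge within the radius is eventually processed, by lazy ordered expansion over a batch. In RIT*, an edge can be discarded only if it fails the collision check, which is impossible on the chain, or if it fails the cascade test. The cascade test is inactive while $c_{\mathrm{best}}=\infty$. The heuristic ordering by $f$ is admissible (Remark~\ref{rem:slc}), so it affects only the order of processing, not the set of processed edges. Consequently $x_g$ enters the tree.

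\textbf{Step 4 (CARM).} A metric update replaces $G$ by $sG$ with $1\le s\le 1+\alpha$. This preserves Assumption~\ref{assum:bounded} with $\lambda_{\max}$ replaced by $(1+\alpha)\lambda_{\max}$, so the constants of Steps~1--2 remain uniform over updates. An update may change only orderings and radii, and before a first solution it prunes nothing, since $c_{\mathrm{best}}=\infty$.

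I expect the main obstacle to be Step~3, namely making rigorous that the batch and lazy structure, together with the per-batch recomputation of $r_n^R$ and the metric changes from CARM, still evaluates every chain edge at some finite iteration. My first approach would be to argue per batch: once a batch realises the occupied chain, the exhaustive $f$-ordered processing within that batch connects it, exactly as in the BIT* proof. I would then invoke that argument directly rather than re-deriving it.
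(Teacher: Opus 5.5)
\textbf{Verdict: genuine gap in Step~2.} Your overall structure is sound, and in one respect it improves on the paper's. The paper discretises $\sigma^*$ at a \emph{fixed} spacing $\delta/2$ and then asserts that $\sqrt{\lambda_{\max}}\,\delta$ eventually falls below $r_n^R$. That is backwards, since $r_n^R\to0$. Tying the balls to $r_n^R$, as you do, is the right construction. Your observation that $c_{\mathrm{best}}=\infty$ disables both pruning and the deflated cascade test also settles Step~3, so Step~3 is not where the difficulty lies.

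The gap is in Step~2, exactly where you say the metric enters. Take Euclidean balls of radius $r/4$ with $r=r_n^R/\sqrt{\lambda_{\max}}$, uniform sampling on $\Cspace$, and $\gamma_R$ from~\eqref{eq:radius} with $\IR=\Cspace$. The expected number of samples in one ball is then
\[
n\,\frac{\zeta_d\,(r/4)^d}{\mu(\Cspace)}
=2^{-d}\Bigl(1+\frac1d\Bigr)\,\frac{\muR(\Cspace)}{\lambda_{\max}^{d/2}\,\mu(\Cspace)}\,\log n
\;\le\;2^{-d}\Bigl(1+\frac1d\Bigr)\log n .
\]
Hence your exponent satisfies $c'\le 2^{-d}(1+1/d)$, which is strictly less than $1/d$ for every $d\ge2$. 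Since $M\asymp(n/\log n)^{1/d}$, the union bound $M\,n^{-c'}$ diverges rather than tending to $0$.

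Worse, the expected number of empty balls itself diverges. Emptiness events of disjoint balls are negatively correlated, so Chebyshev shows that the event Step~2 needs (all $M$ balls occupied) has probability tending to $0$. This happens already for $G=I_d$. Your remark that $r_n^R$ decays more slowly than the typical spacing does not rescue this. The chain needs the \emph{largest} of $M$ gaps, which is of the same order $(\log n/n)^{1/d}$ as $r_n^R$, so the constant is decisive.

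\textbf{How to repair it.} Two losses must be removed.
\begin{enumerate}
\item The factor $2^{-d}$ comes from using radius $r/4$. It is recovered by the Karaman--Frazzoli construction: balls of radius $q_n\approx r/(2+\theta)$ with centres $\theta q_n$ apart.
\item The factor $\muR(\Cspace)/(\lambda_{\max}^{d/2}\mu(\Cspace))$, which can be as small as $\kappa^{-d/2}$, comes from inscribing Euclidean balls via~\eqref{eq:bilip}. Use small Riemannian balls instead. By continuity of $G$ these are nearly ellipsoids of Lebesgue volume $\approx\zeta_d\rho^d/\sqrt{\det G(x)}$.
\end{enumerate}
With both changes the exponent becomes roughly $(1+1/d)\,\muR(\Cspace)/(\sqrt{\det G(x)}\,\mu(\Cspace))$. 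This exceeds $1/d$ for constant $G$. For spatially varying $G$ it does so only if $\sqrt{\det G}$ along $\sigma^*$ stays below about $d+1$ times its average over $\Cspace$. That is an extra hypothesis, or a requirement for a larger $\gamma_R$, and it must be stated.

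Alternatively, you can abandon the single chain. The $\delta$-tube around $\sigma^*$ is $\Theta(\delta/r_n^R)\to\infty$ cells wide. A Peierls-type block argument then works: absence of an occupied crossing forces a $*$-connected set of empty cells whose size grows with $n$. This gives completeness for any $\gamma_R>0$ when $d\ge2$, at the cost of somewhat more machinery.
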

\begin{proof}
\revB{Discretize} $\sigma^*$ into waypoints $w_0,\ldots,w_K$ with $\|w_k-w_{k+1}\|_2<\delta/2$; each is in the fixed set $\IR^{+}=\{x:\dR(x_s,x)+\dR(x,x_g)\le c^*+\varepsilon\}$, which has a positive measure and is contained in $\IR$. Uniform sampling on $\IR$ almost surely hits every $\delta/4$-ball; by~\eqref{eq:bilip} the nearest samples satisfy $\dR(s_k,s_{k+1})<\sqrt{\lambda_{\max}}\,\delta$, which falls below the connection radius $r_n^R=\Theta((\log n/n)^{1/d})$ for sufficiently large $n$, producing a collision-free chain $s_0\!\to\!\cdots\!\to\! s_K$ almost surely.
\end{proof}
 
\begin{theorem}[Asymptotic optimality]\label{thm:ao}
$P\!\bigl(\lim_{n\to\infty} c_n^{\mathrm{RIT*}} = c^*\bigr) = 1$.
\end{theorem}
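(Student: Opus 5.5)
The plan is to reduce the statement to the asymptotic-optimality argument for BIT*~\cite{Gammell2020}, which in turn follows the RRT*/PRM* analysis of Karaman and Frazzoli~\cite{Karaman2011}. The three Riemannian ingredients are handled as perturbations: the metric $\dR$, the sampling domain $\IR$, and the cost evaluation through L1--L3. The bi-Lipschitz bound~\eqref{eq:bilip} turns every Euclidean ball, volume, or clearance statement into a Riemannian one with constants depending only on $\lambda_{\min},\lambda_{\max}$. The argument then proceeds on the Euclidean side.

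\textbf{Step 1: covering-ball sequence.} Using the strong $\delta$-clearance of Assumption~\ref{assum:bounded}, for each $\varepsilon>0$ pick a $\delta$-clearance path $\sigma_\varepsilon$ with $c(\sigma_\varepsilon)\le c^*+\varepsilon$. Cover it by $M_n$ balls of Euclidean radius $\rho_n=\theta\, r_n^R/\sqrt{\lambda_{\max}}$, with centres spaced so that consecutive samples, one per ball, lie within $\dR$-distance $r_n^R$ of each other by~\eqref{eq:bilip}.

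\textbf{Step 2: every ball is hit.} All centres lie in the fixed set $\IR^{+}$ from the proof of Theorem~\ref{thm:complete}, which sits inside every informed set generated by $c_{\mathrm{best}}\ge c^*$. The sampling density on $\IR$ is bounded below by $1/\mu(\IE)$ up to the whitening distortion, which is a constant factor under Assumption~\ref{assum:bounded}. The probability that some ball is empty is therefore at most $M_n(1-c\rho_n^d)^n$. With $\gamma_R$ chosen as in~\eqref{eq:radius}, this quantity is summable in $n$, and the Borel--Cantelli lemma gives that almost surely every ball is eventually occupied.

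\textbf{Step 3: cost of the sample chain.} The chain $s_0\to\cdots\to s_K$ is collision-free by clearance. Its straight-edge cost satisfies $\sum_k c_R(s_k,s_{k+1})\le c(\sigma_\varepsilon)+O(\|\nabla G\|_\infty M_n\rho_n^2)$. This holds because on segments of length $O(\rho_n)$ the integrand of~\eqref{eq:cost} deviates from that of the path piece by a Lipschitz term. Since $M_n\rho_n=O(1)$, the error tends to $0$. Lemma~\ref{lem:edgebias} adds a further $O(\rho_n^{2n+1})$ per edge, which is negligible.

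\textbf{Step 4: the search actually finds the chain.} Three checks are needed. First, the cascade never rejects an improving edge; this is the deflated-threshold bound $\eta_e^{-1}\hat c\le c_R$ from Sec.~\ref{sec:cascade}. Second, pruning with~\eqref{eq:prune} only removes vertices that cannot improve $c_{\mathrm{best}}$. Third, the ordered-search and rewiring argument of BIT* guarantees that $g(s_K)$ is at most the chain cost. Together these give $\limsup c_n\le c^*+\varepsilon$ almost surely. Taking $\varepsilon\downarrow0$ along a countable sequence, and using $c_n\ge c^*$ trivially, proves the claim.

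I expect the main obstacle to be Step 4 in the presence of heuristic and metric approximations. The whitened sampler and the approximated radius $\muR(\IR)\approx\det(\bar G)^{1/2}\mu(\IE)$ are only constant-factor accurate, so I would first absorb all such distortions into a slightly enlarged $\gamma_R$. I would also argue that the whitened region always contains $\IR^{+}$, enlarging it by the factor $\sqrt{\lambda_{\max}/\lambda_{\min}}$ if necessary. A second point is that pruning uses tree-accumulated costs rather than true $\dR$. This errs on the side of overestimation and could in principle prune useful vertices, so I would restrict pruning to the admissible heuristic lower bounds (Remark~\ref{rem:slc}) in the proof.
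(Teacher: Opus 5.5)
Your overall reduction is the same as the paper's sketch, only more explicit: transfer to the Euclidean side via \eqref{eq:bilip}, the deflated cascade thresholds, Lemma~\ref{lem:edgebias} for the quadrature error, and BIT*'s ordered-search and rewiring argument.

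\textbf{The gap is Step~3.} There you replace the cost-convergence step, which the paper imports from BIT* and Karaman--Frazzoli, by a single-scale estimate that is false. Your samples $s_k$ may lie anywhere in balls of radius $\rho_n$. Connectivity forces the centre spacing $\ell_n$ to satisfy $\ell_n+2\rho_n\le r_n^R/\sqrt{\lambda_{\max}}$, so $\rho_n/\ell_n\ge\theta/(1-2\theta)$ stays bounded away from zero. Each edge $s_ks_{k+1}$ can then cost up to $2\sqrt{\lambda_{\max}}\,\rho_n$ more than the corresponding piece of $\sigma_\varepsilon$. This is a first-order term caused by endpoint displacement, and it does not involve $\nabla G$. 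Over the chain it sums to $\Theta(M_n\rho_n)=\Theta(1)$, because $M_n\rho_n$ is bounded below as well as above. The event ``every ball is occupied'' is consistent with alternating transverse offsets, which already inflate a straight Euclidean chain by the factor $\sqrt{1+4\rho_n^2/\ell_n^2}$. So you only get $\limsup c_n\le C(\theta)(c^*+\varepsilon)$ with $C(\theta)>1$. You cannot send $\theta\to0$ without blowing up the constant needed in Step~2.

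The missing idea is the two-scale argument of Karaman--Frazzoli. Keep the $\rho_n$-balls for connectivity and add concentric balls of radius $\beta\rho_n$. A Poissonization/Chernoff bound, which needs only a positive radius constant, shows that almost surely all but an $\alpha$-fraction of the small balls are eventually occupied. Route the chain through small-ball samples where they exist and through big-ball samples otherwise. The excess is then $O\bigl(\beta\sqrt{\lambda_{\max}}\,M_n\rho_n+\alpha M_n r_n^R\bigr)=O(\alpha+\beta)$ on top of your $\nabla G$ and quadrature terms. Letting $\alpha,\beta\to0$ gives the $c^*+\varepsilon$ bound.

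\textbf{A second problem is Step~2.} Its summability does not follow from \eqref{eq:radius}. Even with the favourable density $1/\mu(\IR)$, one of your inscribed Euclidean balls is hit with probability $p_n=(2\theta)^d(1+\tfrac1d)\,q\,\log n/n$. Here $q=\muR(\IR)/\bigl(\lambda_{\max}^{d/2}\mu(\IR)\bigr)\in[\kappa^{-d/2},1]$ and $2\theta<1$. Hence $M_n(1-p_n)^n\approx M_n n^{-a}$ with $a=(2\theta)^d(1+\tfrac1d)q<1+\tfrac1d$, which is not summable since $M_n\propto(n/\log n)^{1/d}$. Shrinking the Riemannian ball to its inscribed Euclidean ball discards exactly the anisotropy that $\muR$ in \eqref{eq:radius} was calibrated for. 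Moreover, \eqref{eq:radius} sits at the threshold, not above it.

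You therefore need a strictly larger constant, larger by a factor of up to order $\sqrt{\kappa}$, not ``slightly''. Like your pruning and whitening fixes, that proves the theorem for a modified planner. The paper's sketch makes the same move: $\gamma_R\ge\lambda_{\min}^{1/2}\gamma_{\mathrm{KF}}$ only gives $\gamma_R/\sqrt{\lambda_{\max}}\ge\kappa^{-1/2}\gamma_{\mathrm{KF}}$ at its Euclidean radius $\bar r_n$. So this weakness is shared with the reference argument rather than resolved by it.
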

\begin{proof}
We verify the Karaman-Frazzoli conditions~\cite{Karaman2011}. By~\eqref{eq:bilip}, a Riemannian ball of radius $r_n^R$ contains a Euclidean ball of radius $r_n^R/\sqrt{\lambda_{\max}}$, so RIT*'s edge set is a superset of BIT*'s at $\bar r_n=r_n^R/\sqrt{\lambda_{\max}}$ and inherits its neighbour-graph properties. Using~\eqref{eq:radius}, $\muR(\IR)\ge\lambda_{\min}^{d/2}\mu(\IR)>0$, so the connection-radius constant $\gamma_R \ge \lambda_{\min}^{1/2}\,\gamma_{\mathrm{KF}}$ (where $\gamma_{\mathrm{KF}}$ is the Karaman-Frazzoli lower bound on $\gamma$ in~\eqref{eq:radius}) exceeds the Karaman-Frazzoli threshold. The remaining steps (dense approximation of $\sigma^*$, cost convergence, optimal rewiring) follow~\cite{Gammell2020} with $\dR$ replacing $\|\cdot\|_2$ and~\eqref{eq:bilip} together with the GL-$n$ convergence bound of Lemma~\ref{lem:edgebias} bounding the approximation error.
\end{proof}
 
\begin{figure}[t]
\centering
\includegraphics[width=0.88\columnwidth]{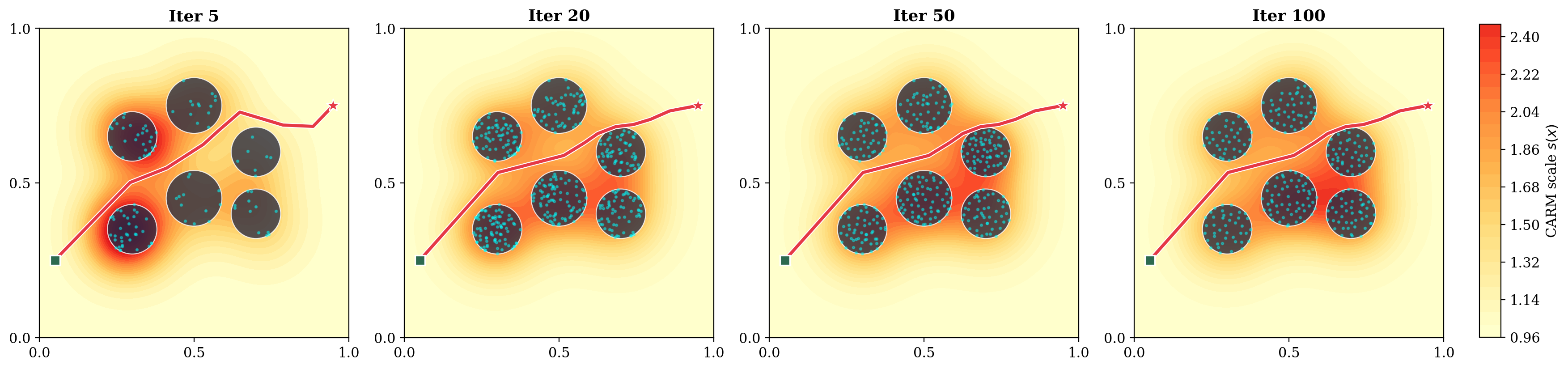}
\caption{CARM metric evolution: as collision points accumulate, the learned cost field refines, tightening the informed set and improving sample allocation.}
\label{fig:carm_evolution}
\vspace{-4mm}
\end{figure}

\begin{corollary}[CARM preserves guarantees]\label{cor:carm}
With CARM, $1\le s(x)\le 1+\alpha$, so the scaled metric satisfies \Cref{assum:bounded} with $(\lambda_{\min},(1+\alpha)\lambda_{\max})$. Updates terminate at iteration $t_0$ when $\|\hat f_t-\hat f_{t-1}\|_\infty<\varepsilon_0$ ($\varepsilon_0=10^{-3}$); since at most $B$ collision points enter per $K_c$-window, the per-update change is $O(B/N_t)\to 0$, so $t_0<\infty$ almost surely. For $t\ge t_0$, the metric is frozen at $G_\infty$ and \Cref{thm:complete,thm:ao} apply with its constants. Since $G_\infty\succeq G$ implies $d_R^{G_\infty}\!\ge\!d_R^{G}$ and hence $\IR^{G_\infty}\!\subseteq\!\IR^{G}$, the pruning rule~\eqref{eq:prune} applied under $G_\infty$ targets the $G_\infty$-optimal solution, which is the planner's objective after CARM refinement.
\end{corollary}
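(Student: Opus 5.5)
The corollary bundles four separate claims, and I will prove them in order: the eigenvalue bounds, finite termination of the updates, reduction to \Cref{thm:complete,thm:ao}, and the informed-set inclusion.

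\textbf{Step 1: bounds on the scale field.} Each kernel term in~\eqref{eq:kde} lies in $(0,1]$, so $0\le\hat f(x)\le 1$. Hence $1\le s(x)\le 1+\alpha$. Since $s$ is a positive scalar, $s(x)G(x)$ is symmetric positive definite and satisfies $\lambda_{\min}I_d\preceq s(x)G(x)\preceq(1+\alpha)\lambda_{\max}I_d$.

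For the gradient condition I use the product rule, $\nabla(sG)=(\nabla s)G+s\nabla G$. Here $\nabla s=\alpha\nabla\hat f$, and each Gaussian has gradient bounded by $e^{-1/2}/\sigma$, so $\|\nabla s\|_\infty\le\alpha/\sigma$ for fixed $\sigma>0$. Together with $\|\nabla G\|_\infty<\infty$ this keeps $\|\nabla G_{\mathrm{CARM}}\|_\infty$ finite, and \Cref{assum:bounded} holds with constants $(\lambda_{\min},(1+\alpha)\lambda_{\max})$.

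I read $G_{\mathrm{CARM}}$ as $s\,G_{\mathrm{base}}$, with the KDE rebuilt from all collision points accumulated so far. I will not read it as repeated compounding $G_{t+1}=s_tG_t$, which could drive the upper bound to infinity. I will state this reading explicitly.

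\textbf{Step 2: updates stop in finite time.} Let $N_t$ be the buffer size at update $t$. Between consecutive updates at most $\Delta_t\le B K_c$ new points arrive, and each contributes a kernel value at most $1$. Writing the new KDE as a reweighted average,
\[
\hat f_t-\hat f_{t-1}=\frac{1}{N_t}\sum_{\mathrm{new}}k_i-\frac{\Delta_t}{N_t}\hat f_{t-1},
\]
which gives $\|\hat f_t-\hat f_{t-1}\|_\infty\le 2\Delta_t/N_t$.

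This is the main obstacle. The bound tends to zero only if $N_t\to\infty$, whereas if collisions stop arriving the change is exactly zero. In either case the stopping criterion $\|\hat f_t-\hat f_{t-1}\|_\infty<\varepsilon_0$ is eventually met, so I will argue by these two cases.

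To force $N_t\to\infty$ in the first case I need collisions to keep occurring at a positive rate. Rather than prove that, I will sidestep it. If only finitely many collision points ever arrive, the buffer is eventually constant and the change is $0$. Otherwise $N_t$ is unbounded and nondecreasing, so $N_t\to\infty$, and since $\Delta_t\le BK_c$ is bounded, $2\Delta_t/N_t\to 0$. Both cases yield $t_0<\infty$ almost surely.

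\textbf{Step 3: apply the theorems and compare informed sets.} For $t\ge t_0$ the metric is a fixed $G_\infty$ satisfying \Cref{assum:bounded}. The samples drawn after $t_0$ form an infinite i.i.d.\ sequence, and the finite prefix only adds vertices, so \Cref{thm:complete,thm:ao} apply with the constants from Step 1. The strong $\delta$-clearance of the $G_\infty$-optimal path is a geometric property of $\Cfree$, which I take as assumed.

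Finally, $s\ge1$ gives $\dot\sigma^\top G_\infty\dot\sigma\ge\dot\sigma^\top G\dot\sigma$ pointwise, so $c^{G_\infty}(\sigma)\ge c^G(\sigma)$ for every path. Taking infima gives $d_R^{G_\infty}\ge d_R^G$. Comparing $c_{\mathrm{best}}$ against the sum of two larger distances then gives $\IR^{G_\infty}\subseteq\IR^G$ at the same $c_{\mathrm{best}}$. Pruning by~\eqref{eq:prune} under $G_\infty$ is therefore admissible for the $G_\infty$-optimal problem.
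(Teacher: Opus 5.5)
Your Steps 1 and 2 follow the argument packed into the corollary statement itself, and they tighten it. The rebuild (non-compounding) reading is what makes the constant $(1+\alpha)\lambda_{\max}$ correct. Algorithm~\ref{alg:ritstar} literally compounds, but since Step 2 gives finitely many updates $K$, compounding only changes the constant to $(1+\alpha)^{K}\lambda_{\max}$; it cannot blow up. Your case split also removes the paper's unstated assumption $N_t\to\infty$.

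The gap is in Step 3, at ``the finite prefix only adds vertices'', and the paper's sketch has the same hole. The prefix also leaves behind edge costs, cost-to-come values $g(\cdot)$ and, above all, $c_{\mathrm{best}}$, all computed under earlier metrics. Algorithm~\ref{alg:ritstar} only ever lowers $c_{\mathrm{best}}$, and the pruning of Sec.~\ref{sec:rewire} reuses the accumulated $c_R$ values. Because $G_\infty$ is $G$ scaled by a factor $>1$ everywhere, generically $c^*_{G_\infty}>c^*_G$. So a near-$G$-optimal solution found before the first update leaves $c_{\mathrm{best}}<c^*_{G_\infty}$. This breaks the invariant $c_{\mathrm{best}}\ge c^*$ on which every informed-set optimality argument rests. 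Sampling in $\IR^{G_\infty}$, the cascade test and rule~\eqref{eq:prune} then work against a bound below the optimum and can discard the neighbourhood of the $G_\infty$-optimal path. Moreover, no goal connection whose cost-to-come is correctly computed under $G_\infty$ can ever satisfy $g(x_g)<c_{\mathrm{best}}$. The run after $t_0$ is therefore not ``RIT* under a fixed $G_\infty$ with extra initial vertices'', and Theorem~\ref{thm:ao} cannot be invoked as you do.

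To close the gap you need a re-costing step, stated as an explicit assumption on (or modification of) the algorithm. At $t_0$, or at every CARM update, re-evaluate the tree's edge costs under the new metric and reset $c_{\mathrm{best}}$ to the $G_\infty$-cost of the incumbent. Alternatively, restart the search at $t_0$, keeping the vertex set. Then $c_{\mathrm{best}}\ge c^*_{G_\infty}$ holds, and admissibility of~\eqref{eq:prune} for the $G_\infty$ problem follows directly from $d_R^{G_\infty}$ being an infimum of $G_\infty$-costs: any $x$ on a feasible path $\pi$ with $c^{G_\infty}(\pi)\le c_{\mathrm{best}}$ satisfies $d_R^{G_\infty}(x_s,x)+d_R^{G_\infty}(x,x_g)\le c^{G_\infty}(\pi)\le c_{\mathrm{best}}$.

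Your closing ``therefore'' instead derives admissibility from $\IR^{G_\infty}\subseteq\IR^{G}$, but that inclusion points the other way. It shows pruning under $G_\infty$ is at least as aggressive as under $G$, which explains why the guarantee is relative to $G_\infty$ rather than the base metric, not why the pruning is safe.

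A minor point: $\Delta_t\le BK_c$ is borrowed from the statement and is not implied by Algorithm~\ref{alg:ritstar}, which records a point for every failed edge check. Step 2 survives without it. Both terms of your identity lie in $[0,\Delta_t/N_t]$, so a change of at least $\varepsilon_0$ at every update would force $N_t\ge N_{t-1}/(1-\varepsilon_0)$. That is geometric growth of the buffer, which is impossible when the number of edge checks per iteration is polynomially bounded.
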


\vspace{-6mm}

\section{Experiments}\label{sec:experiments}

\subsection{Experimental Setup}\label{sec:setup}

We evaluated RIT* in environments that span 2-D, 3-D, 6-D and
14-D configuration spaces (Table~\ref{tab:environments}, Fig.~\ref{fig:envs}),
designed to assess both the Riemannian informed-set construction
and the CARM adaptation mechanism\footnote{Code and scripts to reproduce all reported results are provided as supplementary material and will be released publicly.}.

In low-dimensional spaces ($\mathbb{R}^{2}$ and $\mathbb{R}^{3}$),
we consider two geometric benchmarks. The \emph{Random World}
(2-D) consists of a square populated with 30 obstacles under a
Euclidean metric ($\kappa=1$), providing an isotropic baseline.
The \emph{Diagonal} environment (3-D) is a unit cube with
 obstacles aligned with the axis that form a narrow passage, equipped with
a diagonal-anisotropic metric $G=\mathrm{diag}(1,3,6)$
($\kappa\!\approx\!6$) to evaluate direction-dependent costs.

In higher-dimensional manipulation tasks ($\mathbb{R}^{6}$,
$\mathbb{R}^{14}$), we evaluate scalability under articulated robot
dynamics. In $\mathbb{R}^{6}$, a UR10e arm with a Robotiq~85 gripper in
PyBullet~\cite{Coumans2021} uses a joint-space inertia metric
$G(q)=M(q)$ ($\kappa\!\approx\!12$); the task (\emph{UR10 Drill})
retrieves a drill from a cluttered shelf. In $\mathbb{R}^{14}$, a
\emph{Tiago Pro Bimanual} task with two 7-DOF arms performs a
coordinated pre-grasp motion toward a tall box with obstacle pillars
blocking direct paths, under a bimanual inertia metric
($\kappa\!\approx\!18$), our most challenging setting in
dimensionality and anisotropy.

\begin{figure}[t]
\centering
\includegraphics[width=0.8\columnwidth]{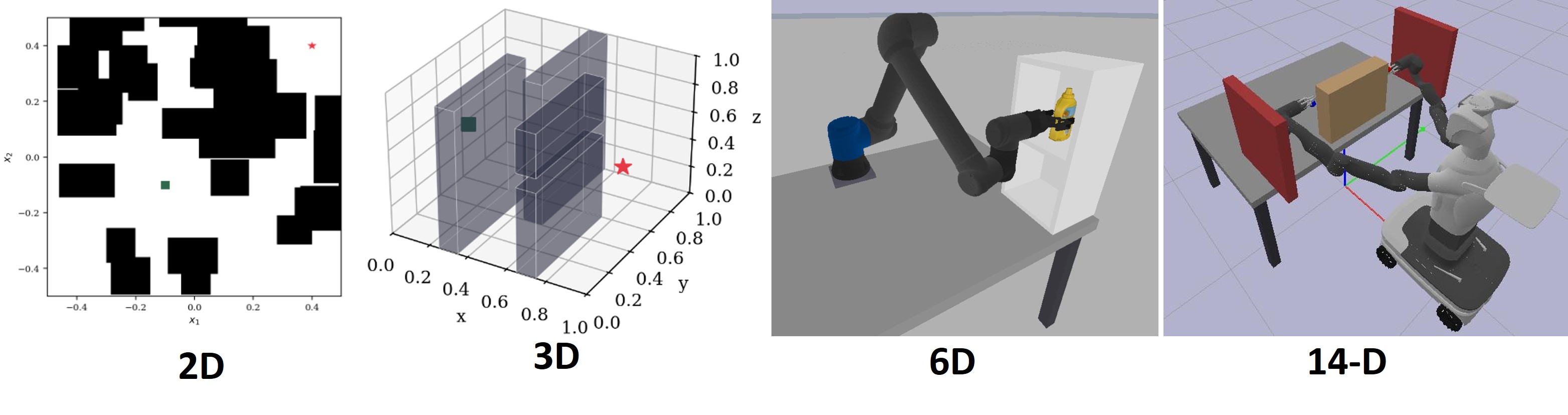}
\caption{Benchmark environments from 2-D to 14-D, covering geometric setups and manipulation tasks for RIT* evaluation.}
\label{fig:envs}
\vspace{-5mm}
\end{figure}

\begin{figure*}[t]
\centering
\includegraphics[width=0.9\textwidth]{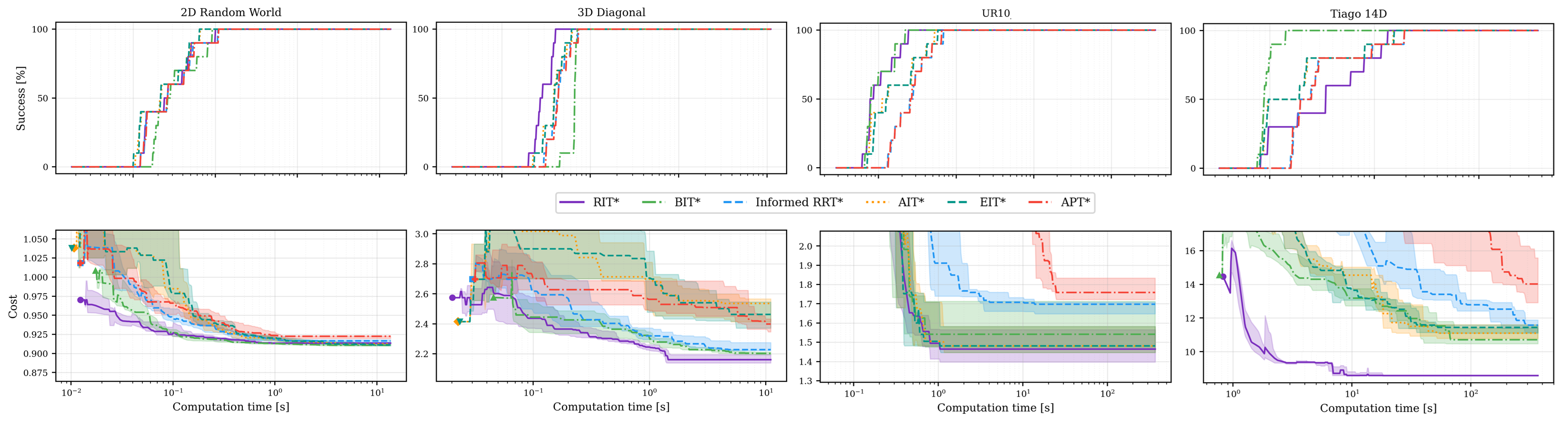}
\caption{Success rate (top) and Riemannian path cost (bottom) vs.\ time across all benchmarks; median over trials with shaded spread. RIT* converges faster and to lower cost, with the largest gains at 6-D/14-D.}
\label{fig:convergence}
\vspace{-5mm}
\end{figure*}

\begin{figure}[t]
\centering
\includegraphics[width=0.62\columnwidth]{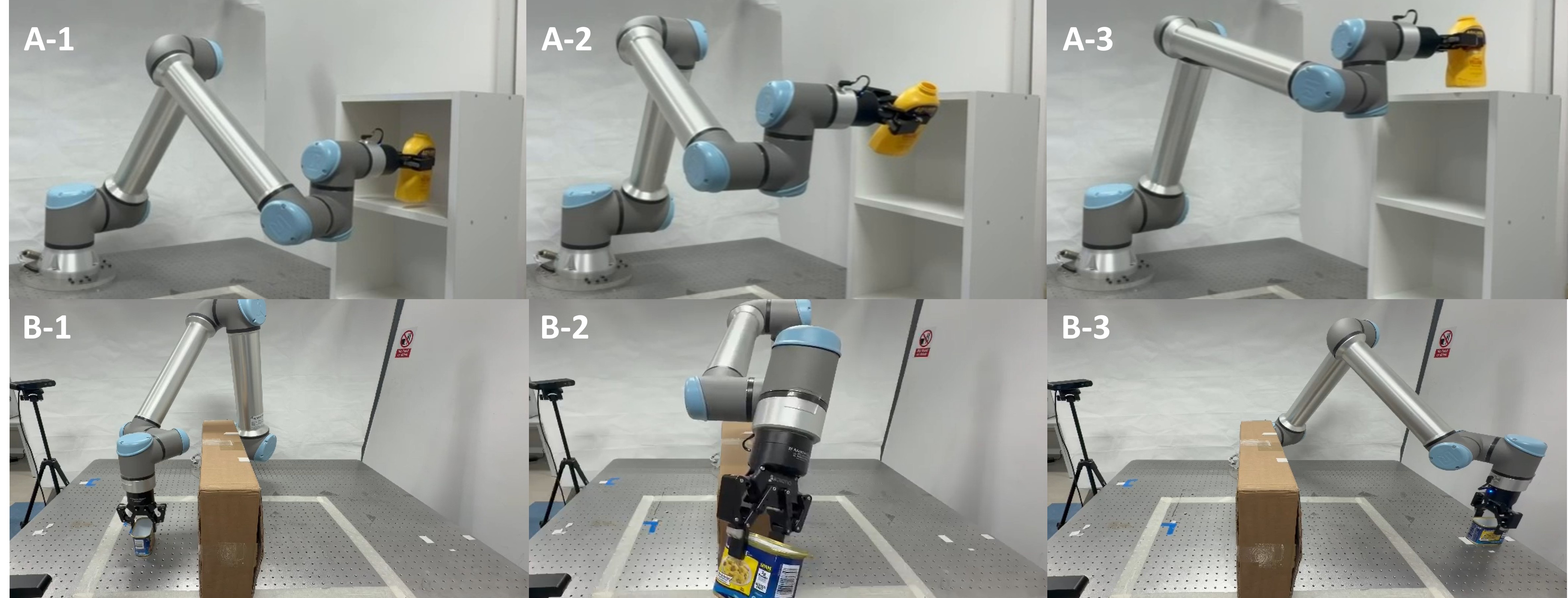}
\caption{UR10e real-robot execution of RIT*-planned trajectories. The planner produces smooth, collision-free motions that transfer directly from simulation to hardware.}\label{fig:real}
\vspace{-6mm}
\end{figure}

We compare RIT* with five asymptotically optimal anytime planners: Informed RRT*~\cite{Gammell2018}, BIT*~\cite{Gammell2020}, AIT*, EIT*~\cite{Strub2022}, and APT*~\cite{APT2025}. All planners are implemented in a common Python framework with shared collision checks, random seeds, and environment interfaces; each is provided with the same metric tensor $G$ and evaluates edge costs using the same Riemannian integral~\eqref{eq:cost}, ensuring a common objective, and the cascading evaluation of Sec.~\ref{sec:cascade} is applied to all planners, so time differences are attributable to informed-set geometry and connectivity rather than integration cost. Baselines retain their original Euclidean informed sets and nearest-neighbour radii, isolating the effect of the Riemannian replacements. All planners use batch size $B=100$, maximum iterations $T=200$, a per-trial timeout of 200\,s, and base seed 60; edge parameters follow Table~\ref{tab:environments}.
\resp{All path costs in Tables~\ref{tab:bench_compact}-\ref{tab:carm} are evaluated \emph{a posteriori} under the fixed base metric $G$ of each environment (the oracle metric in 6-D/14-D), never under $G_{\mathrm{CARM}}$: CARM only shapes where the planner samples, connects, and prunes, so metric inflation cannot bias the reported costs. The oracle-vs-CARM comparison in Table~\ref{tab:carm} therefore directly measures the base-metric suboptimality introduced by CARM's conservativeness, including any bias from spurious collision clusters, bounding it at $\le\!1.0\%$; moreover, since the KDE~\eqref{eq:kde} is normalised by $N$, isolated spurious clusters receive vanishing weight as collision evidence accumulates.
We do not compare against the metric-adaptive Bi-AM-RRT*~\cite{Zhang2023biamrrt}: its diffusion-based assisting metric accelerates search and replanning in dynamic 2-D navigation but is decoupled from the optimisation objective, so under the cost~\eqref{eq:cost} it optimises a different objective, making a path-cost comparison inappropriate. An efficiency-focused comparison in dynamic settings, where Bi-AM-RRT* is designed to excel, is left to future work.}
Runtime profiling on the 14-D benchmark shows collision checking dominates ($\sim$61\% of wall-clock time), followed by nearest-neighbour search (21\%), metric evaluation/cache lookup (11\%), and CARM updates (3\%); the cascade reduces full quadrature and collision checks to $\sim$5\% of candidates.

For RIT*, we use a diagonal approximation of the metric, $G(q)=\mathrm{diag}(M(q))$, evaluated via trilinear interpolation from a precomputed cache on a $\mathrm{res}^d$ grid with $O(1)$ lookup (\textsc{res}$=32$ in 2-D/3-D, \textsc{res}$=3$ in 6-D/14-D). This trade-off preserves the dominant per-joint inertia scaling that drives the reported anisotropy ($\kappa\!\approx\!12$ in 6-D, $\kappa\!\approx\!18$ in 14-D) while avoiding the $O(d^2)$ cache footprint and full $M(q)$ evaluations at every L1 lookup. The coarse 6-D/14-D resolution is justified by the smooth variation of $M(q)$ within the planning bounds (relative interpolation error $<\!4\%$ at $10^4$ random configurations) and because the cache feeds only the $L_1$ filter; $L_3$ re-samples $G(q)$ on the fly.

Each experiment uses $N=50$ trials with a 200\,s timeout. We report wall-clock time and Riemannian path cost; \emph{init}/\emph{final} denote the first and terminal solutions, and \textbf{bold} marks the best per column. The volume approximation of Sec.~\ref{sec:nn} incurs $<\!1\%$ error on our benchmarks and $2.7\%$ at $\beta\!=\!1$ in the controlled sweep of Sec.~\ref{sec:whitening}, consistent with the bound.

\vspace{-4mm}
\subsection{Low-Dimensional Benchmarks (2-D, 3-D)}

In isotropic 2-D ($\kappa=1$), RIT* attains the lowest initial cost (0.958), with gaps to baselines of 1.3-14.3\%; Euclidean and Riemannian informed sets coincide here, so the small lead reflects only the tighter NN ellipsoid. At convergence all planners cluster within $1.3\%$ ($c^{\mathrm{med}}_{\mathrm{fin}}\!\in\![0.911,0.923]$, BIT* marginally lowest at $0.911$ vs.\ RIT* $0.914$), confirming the predicted equivalence under isotropy.
In 3-D ($\kappa\!\approx\!6$), RIT* attains the lowest cost (2.49) and is fastest in median time to a first solution (0.040\,s, narrowly ahead of EIT* at 0.041\,s), outperforming APT* (2.72, +9.3\%), BIT* (2.82, +13.0\%), IRRT* (2.83, +13.4\%), EIT* (2.87, +15.3\%), and AIT* (2.99, +20.1\%). The 3-D Diagonal metric is spatially constant, so the mean-metric whitening of Sec.~\ref{sec:whitening} is exact and $\IR$ is a uniformly rescaled $\IE$; the gain over BIT* here stems from the tighter nearest-neighbour ellipsoid, with informed-set tightening becoming dominant when the metric varies spatially (Sec.~\ref{sec:highd}). The lead persists at convergence: RIT* attains the lowest $c^{\mathrm{med}}_{\mathrm{fin}}=2.16$, with gaps of $+1.9$-$13.1\%$ across baselines.
\vspace{-4mm}
\subsection{High-Dimensional Benchmarks (6-D, 14-D)}\purple{\label{sec:highd}}

In the 6-D UR10 task ($\kappa\!\approx\!12$), RIT* achieves the
lowest cost across all baselines (1.91) while maintaining complete success
(100\%); RIT* improves on BIT* (1.94, +1.4\%) and IRRT*
(1.95, +2.1\%); AIT* (2.78, +45.2\%), EIT* (2.79, +45.9\%), and APT* (2.93, +52.8\%) trail further. At convergence RIT* retains the lead ($c^{\mathrm{med}}_{\mathrm{fin}}$: RIT* $1.414$, EIT* $1.453$, AIT* $1.459$; spread $3.2\%$), with BIT* (+$9.0\%$), IRRT* (+$20.0\%$), and APT* (+$24.0\%$) trailing; the Riemannian informed set tightens both early progress and final convergence in this regime.
In the 14-D Tiago task ($\kappa\!\approx\!18$), RIT* attains
the lowest cost (8.58),
outperforming BIT* (10.71, +24.8\%), AIT* (11.10, +29.4\%), and
EIT* (11.44, +33.4\%), with IRRT* (13.58, $+58.3\%$) and APT* (14.02, $+63.5\%$) trailing further. The same ordering holds for $c^{\mathrm{med}}_{\mathrm{fin}}$, where RIT*'s lead remains the largest across all environments, indicating that informed-set tightening, not just early connectivity, drives the gain at high dimension and anisotropy.
Performance gains grow with both anisotropy and dimensionality: up to +14.3\% (2-D), +20.1\% (3-D), up to +24.0\% in 6-D, and +24.8–63.5\% (14-D, across all five baselines).
\resp{RIT*'s median time to a \emph{first} solution in 14-D (3.43\,s) is about four times that of BIT* (0.88\,s). This overhead has two sources: the smaller Riemannian connection radius~\eqref{eq:radius}, since $\muR(\IR)<\mu(\IE)$ results in fewer candidate edges per batch and consequently delays first connectivity in 14 dimensions, and the additional cost of per-edge metric evaluations. This initial overhead is offset after the first solution: RIT* converges to a 24.8\% lower final cost than BIT* (Fig.~\ref{fig:convergence}). Where first-solution latency is critical, a hybrid schedule that uses the Euclidean connection radius until an initial solution is found could reduce this overhead without affecting the asymptotic guarantees.}

\textit{Real robot execution:}
We additionally demonstrate execution of RIT*-planned trajectories on a UR10e platform (Fig.~\ref{fig:real}); a full hardware comparison is left to future work. The trajectories are smooth, collision-free, and preferentially use lower-inertia joints (wrist over shoulder/elbow), consistent with simulation. Across 20 hardware runs of three planned trajectories, all were collision-free, with mean joint tracking error $0.018\pm0.004$\,rad and end-effector deviation $4.2\pm1.1$\,mm. 

\vspace{-4mm}
\subsection{Ablation Study}\label{sec:ablation}

We isolate the contribution of each component across representative
2-D, 3-D, 6-D, and 14-D environments (Table~\ref{tab:ablation}):
(i) \emph{RIT* (full)}, (ii) \emph{w/o Riemannian sampling}
(Euclidean informed set), (iii) \emph{w/o cascading}
(full edge evaluation), and (iv) \emph{w/o CARM} (oracle metric).
The oracle variant uses the ground-truth metric and serves as an upper bound rather than a directly comparable method.

\textit{Observations:}
\emph{Low-D:} Riemannian sampling provides the dominant benefit under anisotropy ($+8.7\%$ in 3-D without it; negligible in isotropic 2-D, where removing it is marginally lower, 0.916 vs.\ 0.921 -- expected, since with $G\!=\!I_d$ the whitening reduces to the identity and only adds numerical overhead; the 0.5\% gap lies within trial-to-trial spread $\pm 0.007$).
CARM closely tracks the oracle in every environment ($<\!1.1\%$ overhead), with the largest residuals in 2-D and 3-D ($+0.9\%$, $+1.0\%$), where the cost field is learned from $G\!=\!I_d$.
\emph{High-D:} gains are smaller, and performance is primarily governed by informed-set geometry; CARM provides incremental refinement over the base metric.
\emph{Cascading} has $<0.3\%$ impact on cost, confirming it is primarily a computational optimisation. These trends are consistent with the analysis in Sec.~\ref{sec:carm_analysis}.
\begin{table}[t]
\centering
\caption{Ablation study: mean path cost ($\pm$ std) over 50 trials. \textbf{Bold}: best practical variant with complete success (oracle excluded).  ``w/o CARM'' uses the oracle metric. }
\label{tab:ablation}
\resizebox{\columnwidth}{!}{%
\begin{tabular}{@{}l rrrr@{}}
\toprule
\textbf{Variant}
  & \textbf{2-D }
  & \textbf{3-D }
  & \textbf{6-D }
  & \textbf{14-D } \\
\midrule
RIT* (full)
  & 0.921\,$\pm$\,0.007
  & \textbf{2.196}\,$\pm$\,0.027
  & \textbf{1.489}\,$\pm$\,0.082
  & \textbf{8.638}\,$\pm$\,0.144 \\
w/o Riem.\ samp.
  & \textbf{0.916}\,$\pm$\,0.002
  & 2.388\,$\pm$\,0.182
  & 1.459\,$\pm$\,0.075
  & 9.226\,$\pm$\,0.935 \\
w/o cascading
  & 0.920\,$\pm$\,0.008
  & 2.195\,$\pm$\,0.017
  & 1.489\,$\pm$\,0.082
  & 8.638\,$\pm$\,0.144 \\
w/o CARM
  & 0.913\,$\pm$\,0.003
  & 2.174\,$\pm$\,0.033
  & 1.484\,$\pm$\,0.086
  & 8.586\,$\pm$\,0.012 \\
\bottomrule
\end{tabular}}
\vspace{-4mm}
\end{table}

\vspace{-4mm}
\subsection{CARM Analysis}\label{sec:carm_analysis}

We evaluate how well CARM approximates the oracle (ground-truth) metric by comparing RIT* under learned versus oracle metrics (Table~\ref{tab:carm}). In 6-D/14-D the oracle is the inertia matrix $M(q)$ and CARM applies a learned proximity scaling to this same $M(q)$, so the small overhead ($<\!1\%$) partly reflects the strength of the prior. The low-dimensional case is therefore most informative: starting from $G=I_d$, CARM must reconstruct the obstacle-proximity cost entirely from collision feedback, yet attains overheads of only $+0.9\%$ (2-D) and $+1.0\%$ (3-D). Across all four settings CARM stays within $\sim\!1\%$ of the oracle, indicating that the learned scalar field recovers nearly all the obstacle-proximity information available from collisions, an appropriate scope since collisions carry positional but not directional information.

Two additional checks support these conclusions: (a) a $3\!\times\!3\!\times\!3$ sweep over $\sigma\!\in\!\{0.05,0.1,0.2\}$, $\alpha\!\in\!\{5,10,20\}$, $K_c\!\in\!\{10,15,30\}$ on the 3-D Diagonal benchmark yields median costs in $[2.18, 2.24]$ across the 27 configurations, a worst-case variation of $\pm 1.4\%$ around our default $(0.1,10,15)$, which is also the best cell, indicating low hyperparameter sensitivity; (b) re-running 6-D UR10 with $G\!=\!I_d$ (no inertia prior), RIT*+CARM attains median cost $1.451$ vs.\ $1.414$ for the oracle ($+2.6\%$) and $1.520$ without CARM ($+7.5\%$), so CARM recovers $\sim\!2/3$ of the oracle gap from collisions alone.
\resp{This last study also indicates that CARM is not tied to the Riemannian machinery of RIT*: since it simply rescales the cost field used for edge evaluation, it can be attached to any planner that evaluates edge costs. With an isotropic base $G_0=I_d$, it effectively turns a Euclidean planner into a cost-space variant, and the $G=I_d$ result above suggests that similar gains may transfer.}

\begin{table}[t]
\centering
\scriptsize
\caption{CARM quality analysis: mean path cost (oracle vs.\ learned metric). Overhead denotes relative cost increase w.r.t.\ the oracle; lower is better.}
\label{tab:carm}
\begin{tabular}{@{}l ccc@{}}
\toprule
\textbf{Env} & \textbf{Oracle} & \textbf{CARM} & \textbf{Overhead} \\
\midrule
2-D  & $0.913 \pm 0.003$ & $0.921 \pm 0.007$ & $+0.9\%$ \\
3-D  & $2.174 \pm 0.033$ & $2.196 \pm 0.027$ & $+1.0\%$ \\
6-D & $1.484 \pm 0.086$ & $1.489 \pm 0.082$ & $+0.3\%$ \\
14-D  & $8.586 \pm 0.012$ & $8.638 \pm 0.144$ & $+0.6\%$ \\
\bottomrule
\end{tabular}
\vspace{-4mm}
\end{table}
\vspace{-3mm}
\section{Discussion and Limitations}\label{sec:conclusion}

\textit{Discussion: } 
The benefit of RIT* increases with both anisotropy ($\kappa$) and dimensionality. For isotropic costs ($\kappa\!\approx\!1$), $\IR \approx \IE$ and the gap to informed-set baselines is small (within 7.4\%), although weaker informed-sampling strategies (APT*) trail by up to 14.3\%. As anisotropy grows, $\IR$ aligns progressively better with the cost structure: +13.0\% over BIT* in 3-D ($\kappa\!\approx\!6$), and in 14-D ($\kappa\!\approx\!18$) the advantage is largest at $+24.8$-$63.5\%$ over all five baselines. Overall, the gain scales with the tightening of the informed set under the cost geometry, while RIT* maintains competitive convergence speed (Fig.~\ref{fig:convergence}) despite the additional metric computations.

\textit{Limitations: }
Some limitations suggest directions for improvement.
(i)~The metric cache scales as $\mathrm{res}^d$, which can become prohibitive in high dimensions; sparse, adaptive, or learnt representations could reduce this. 
(ii)~CARM employs isotropic Gaussian kernels, which may blur directional structure near obstacles; \resp{contact-normal directions returned by the collision checker could provide
the missing directional information, enabling anisotropic kernels or a
tensor-valued (rather than conformal) refinement aligned with local
obstacle geometry.}
(iii)~CARM relies on collision feedback, and its effectiveness depends on the informativeness of sampled collisions; in simpler settings recovery may be slower, motivating exploration that actively probes uncertain regions. (iv) The computational overhead of metric evaluation and caching may impact scalability in very high dimensions. 
\vspace{-3mm}
\section{Conclusions}
We introduced RIT*, a Riemannian batch-informed planner that replaces Euclidean primitives with cost-consistent counterparts and learns a collision-adaptive cost refinement online via CARM. From 2-D to 14-D, RIT* produces lower-cost solutions under anisotropic metrics while remaining competitive in isotropic settings, with gains scaling with anisotropy and dimension ($+13.0\%$ over BIT* in 3-D, up to $+24.0\%$ in 6-D, and $+24.8$-$63.5\%$ in 14-D bimanual planning), and CARM recovers most of the oracle-metric advantage from collision feedback alone.
\vspace{-4mm}

\bibliographystyle{IEEEtran}
\bibliography{refs}
\balance

\end{document}